\def\lerand{Lemane}
\newcommand{\vect}[1]{\boldsymbol{#1}}
\def\header{\vspace{2mm} \noindent}
\def\prox{S}
\definecolor{LightSteelBlue}{RGB}{213,229,255}
\newtheorem{theorem} {Theorem}
\newtheorem{lemma} {Lemma}
  \providecommand\BibTeX{{
    \normalfont B\kern-0.5em{\scshape i\kern-0.25em b}\kern-0.8em\TeX}}}
\begin{document}
\fancyhead{}

\title{Learning Based Proximity Matrix Factorization for Node Embedding}

\author{Xingyi Zhang}
\author{Kun Xie}
\author{Sibo Wang}
\authornote{Sibo Wang is the corresponding author.}
\affiliation{
  \institution{The Chinese University of Hong Kong}
  \city{Hong Kong SAR}
    \country{China}
}
\email{xyzhang, xiekun, swang@se.cuhk.edu.hk}

\author{Zengfeng Huang}
\affiliation{
  \institution{Fudan University}
  \city{Shanghai}
  \country{China}
}
\email{huangzf@fudan.edu.cn}

\begin{abstract}
\label{sec:sec-abstract}
Node embedding learns a low-dimensional representation for each node in the graph. Recent progress on node embedding shows that proximity matrix factorization methods gain superb performance and scale to large graphs with millions of nodes. Existing approaches first define a proximity matrix and then learn the embeddings that fit the proximity by matrix factorization. Most existing matrix factorization methods adopt the same proximity for different tasks, while it is observed that different tasks and datasets may require different proximity, limiting their representation power. 

Motivated by this, we propose {\em Lemane}, a framework with trainable proximity measures, which can be learned to best suit the datasets and tasks at hand automatically. Our method is end-to-end, which incorporates differentiable SVD in the pipeline so that the parameters can be trained via backpropagation. However, this learning process is still expensive on large graphs. To improve the scalability, we train proximity measures only on carefully subsampled graphs, and then apply standard proximity matrix factorization on the original graph using the learned proximity. Note that, computing the learned proximities for each pair is still expensive for large graphs, and existing techniques for computing proximities are not applicable to the learned proximities. Thus, we present generalized push techniques to make our solution scalable to large graphs with millions of nodes. Extensive experiments show that our proposed solution outperforms existing solutions on both link prediction and node classification tasks on almost all datasets.

\end{abstract}

\begin{CCSXML}
<ccs2012>
<concept>
<concept_id>10002951.10003227.10003351</concept_id>
<concept_desc>Information systems~Data mining</concept_desc>
<concept_significance>500</concept_significance>
</concept>

<concept>
<concept_id>10010147.10010257.10010258.10010260.10010271</concept_id>
<concept_desc>Computing methodologies~Dimensionality reduction and manifold learning</concept_desc>
<concept_significance>500</concept_significance>
</concept>
</ccs2012>
\end{CCSXML}

\ccsdesc[500]{Information systems~Data mining}
\ccsdesc[500]{Computing methodologies~Dimensionality reduction and manifold learning}

\keywords{Node Embedding; Trainable Proximity; Matrix Factorization}

\maketitle

\section{Introduction}
\label{sec:sec-Introduction}

Node embedding is the task to map nodes in the original graph into low-dimensional representations. For each node, it outputs an embedding vector and the embedding vectors play an important role in preserving not only the structural information but also other underlying properties in the graph. 
These vectors can be fed into machine learning models, facilitating widespread machine learning tasks, such as node classification \cite{deepwalk,node2vec,LINE}, link prediction \cite{GraphGAN,verse}, graph reconstruction \cite{STRAP,nrp}, and recommendation \cite{recommend1,recommend2}.

An important category of node embedding methods with superb performance is the ones using proximity matrix factorization.
For such solutions, they first define the proximity matrix $\vect{\prox}$ for the nodes in the input graph where $\vect{\prox}(i, j)$ is the proximity measure of node $j$ with respect to node $i$. Different methods may adopt different proximity measures and {\em personalized PageRank (PPR)} is a popular choice of proximity measure in node embedding. For example, PPR is adopted in NRP \cite{nrp} as the proximity. In STRAP \cite{STRAP}, the authors further propose to adopt $\vect{\pi}_u(v)+\vect{\pi}^T_v(u)$ as the proximity where $\vect{\pi}_u(v)$ is the PPR of $v$ with respect to $u$ and $\vect{\pi}^T_v(u)$ is the PPR of $u$ with respect to $v$ on the transpose graph $G^T$ by reversing the direction of each edge in $G$. Given the proximity matrix $\vect{\prox}$, two embedding vectors $\vect{x}_u$ and $\vect{y}_u$ are derived such that $\vect{x}_u\cdot \vect{y}_v \sim \vect{\prox}(u,v)$. For existing solutions in this category, the embedding vectors are typically obtained by singular value decomposition (SVD) or eigen-decomposition on $\vect{\prox}$ or on a sparse matrix closely related to $\vect{\prox}$.

Despite their success, all existing matrix factorization approaches aim to learn an embedding that preserves the chosen proximity without considering if the proximity is suitable for the task on the dataset or not. However, it is observed that different tasks and datasets may require different proximities to achieve high performance, limiting the representation power of such solutions. In addition, it is shown in existing node embedding methods, e.g., STRAP \cite{STRAP} and NetMF \cite{NetMF}, that non-linear operations (such as taking logarithm or softmax) on the proximity matrix can help improve the representation power of the embedding. Nevertheless, most latest matrix factorization methods, like HOPE \cite{HOPE}, AROPE \cite{AROPE}, and NRP \cite{nrp},  do not explicitly derive the proximity matrix, which limits their representation powers. 
For those methods that explicitly derive the proximity matrix, it indicates that the matrix factorization needs to be taken on the final proximity matrix. Thus, to derive trainable proximity measures, the model needs to be end-to-end and includes the proximity computation as well as the SVD decomposition into the training process. This further imposes challenges especially when the input graph has millions of nodes.

\header
{\bf Contribution.} Motivated by the limitation of existing solutions, we present an effective framework {\em \lerand}\footnote{\underline{Le}arning based Proximity \underline{Ma}trix Factorization for \underline{N}ode \underline{E}mbedding} with trainable proximity measures, which can be learned to best suit the datasets and tasks at hand automatically. Our trainable proximity measure is inspired by personalized PageRank (PPR) \cite{pagerank}. The PPR $\vect{\pi}_u(v)$ can be defined as the probability that an $\alpha$-discounted random walk from $u$ stops at node $v$, where an $\alpha$-discounted random walk from a source node $u$ has $\alpha$ probability to stop at the current node and has $(1-\alpha)$ probability to randomly jump to an out-neighbor of the current node. For $\alpha$-discounted random walks, the majority will always be the one-hop random walks, which may not be the most representative one for the task at hand. This motivates us to learn a more representative random walk for our trainable proximity measure. Instead of fixing the stopping probability as $\alpha$ at each step, our trainable proximity is defined on a supervised random walk where the stopping probability at the $l$-th hop is learned by our defined loss function. Then, our trainable proximity of node $v$ with respect to $u$, dubbed as the supervised PPR $\vect{\prox}(u,v)$, is the probability that the supervised random walk from $u$ stops at $v$.

To learn the stopping probabilities at each hop for the supervised random walk, we design different loss functions for different tasks in order to learn a more representative proximity for the task at hand. In this paper, we focus on two popular tasks of node embedding: link prediction and node classification. Given the loss functions and trainable parameters, we then design an end-to-end method which incorporates a differentiable SVD in the pipeline so that the parameters can be trained via backpropagation. Our solution is mainly inspired by previous work on learning-based low-rank approximations \cite{low-rank}, which includes a differentiable SVD that allows the gradients to flow easily in the framework to solve the low-rank approximation problem. In our framework, with the differentiable SVD, the gradients can easily flow from the loss function and differentiable SVD to our proximity matrix computation, which is determined by the training parameters, i.e., the stopping probabilities at each hop of the supervised random walk. 

However, the above training process is too expensive and does not scale to large graphs. To improve the scalability, we train the stopping probabilities for the supervised random walk only on carefully subsampled graphs, and then apply standard proximity matrix factorization on the original graph using the learned proximity. Our main observation is that the stopping probabilities at each hop of the supervised random walk are node-independent. Thus, with a carefully subsampled graph, the learned stopping probability at each hop for the task should still be similar to that on the input graph. Motivated by this, we present an effective subgraph sampling based method to train the parameters on multiple sampled subgraphs, which improves the scalability of our {\lerand}.

Finally, given learned probabilities, computing learned proximities for each node-pair is still expensive for large graphs, and existing efficient algorithms for computing proximities, like PPR, are not applicable to the learned proximities. Thus, we present generalized push techniques to make our solution scalable to large graphs with millions of nodes. 
Given a source node $s$, our generalized push algorithm computes an approximate supervised PPR score for each node with respect to $s$ with $O(\frac{1}{\delta})$ cost where $\delta$ is a parameter to control the quality of approximate supervised PPR scores. Then, the proximity matrix can be computed with $O(\frac{n}{\delta})$ cost and takes $O(\frac{n}{\delta})$ space. 
A sparse SVD algorithm is applied on the proximity matrix $\vect{\prox}$ to derive the final embedding with $O(\frac{n}{\delta}+n\cdot d^2)$ running cost, where $d$ is the embedding dimension. 

In our experiment, we compare our {\lerand} against 15 existing node embedding methods on the link prediction and node classification tasks using 6 real datasets with up to 3 million nodes and 117 million edges. Extensive experiments show that our {\lerand} outperforms existing methods on both link prediction and node classification tasks on almost all datasets.

\section{Related Work}
\label{sec:sec-Preliminary}

There are three basic categories of node embedding methods:  skip-gram methods, matrix factorization methods, and neural network methods. Next, We briefly review existing works for each category.

\header
\textbf{Skip-gram methods.} The methods in this category are inspired by the great success of the word2vec model \cite{word2vec} for natural language processing. DeepWalk \cite{deepwalk} first proposes to train embedding vectors by feeding truncated random walks to the Skip-gram model. The nodes sampled from the random walks are then treated as the positive samples. Subsequent methods try to explore more representative random walks to feed into the Skip-gram model. LINE \cite{LINE} adopts one-hop and two-hop random walks while Node2vec \cite{node2vec} proposes to explore higher-order random walks that exploits both DFS and BFS nature of the graph. VERSE \cite{verse} and APP \cite{APP} adopt $\alpha$-discounted random walks to obtain positive samples. Recently, InfiniteWalk \cite{infwalk} studies DeepWalk in the limit as the window size goes to infinite, linking DeepWalk to graph Laplacian matrix.

\header
\textbf{Matrix factorization methods.} Another idea in node embedding is to do matrix factorization on a chosen proximity matrix. To explicitly derive the proximity matrix, e.g., the case in NetMF \cite{NetMF}, it typically takes $\Theta(n^2)$ cost and is too expensive for large graphs.
To avoid the $\Theta(n^2)$ running cost, HOPE \cite{HOPE}, AROPE \cite{AROPE}, and NRP \cite{nrp} are proposed to derive the embedding without explicitly computing the proximity matrix. 
For instance, instead of computing all-pair proximity scores and then decomposing the proximity matrix $\vect{\prox}$, NRP turns to do SVD on the adjacency matrix, which is sparse for most real-life graphs, reducing the embedding computational cost. 
Another solution to avoid the $\Theta(n^2)$ cost is to calculate a sparsified proximity matrix $\vect{\prox}$. The representative is STRAP \cite{STRAP}, which imposes a threshold $\delta$ and returns at most $O(\frac{1}{\delta})$ proximity scores no smaller than $\delta$ for each node, making the proximity matrix of $O(\frac{n}{\delta})$ size. An SVD is then applied to the sparsified proximity matrix. Since the second solution explicitly derives the proximity matrix, it allows to take non-linear operations on the proximity matrix, improving the representation powers.

\header
\textbf{Neural network methods.} 
Deep learning provides an alternative solution to generate node embeddings. SDNE \cite{SDNE} and DNGR \cite{DNGR} employ multi-layer auto-encoders with a target matrix to generate embeddings. DRNE \cite{DRNE} utilizes the layer normalized LSTM \cite{lstm} to generate node embeddings by aggregating the representations of neighbors of each node recursively. GraphGAN \cite{GraphGAN} adopts the well-known generative adversarial networks \cite{gan} into graph representation learning via an adversarial minimax game. AW \cite{AW} proposes a novel attention model on the power series of the transition matrix, which guides the random walk to pay attention to important positions within the random walks by optimizing an upstream objective. The bottleneck of these solutions is the high computational cost, which restricts these methods to small graphs.

\noindent
\textbf{Other methods.}
There are also several methods that do not belong to the above three categories. For example, GraphWave \cite{graphwave} learns node representations by leveraging heat wavelet diffusion patterns in an unsupervised way. NetHiex \cite{nethiex} captures the underlying hierarchical taxonomy of the graph to learn node representations with multiple components. RaRE \cite{rare} proposes a node embedding method that considers both social rank and proximity of nodes, and separately learns two representations for a node. AutoNE \cite{AutoNE} incorporates AutoML into node embedding, which can automatically optimize the hyperparameters from the subgraphs of the original graph. PBG \cite{pbg} presents a distributed embedding system that uses the block decomposition of the adjacency matrix as a partition method to scale to arbitrary large graphs. A recent work, GraphZOOM \cite{graphzoom}, first generates subgraphs based on a fused graph and then applies existing approaches to generate node embeddings. Since these methods do not preserve any node pair proximity, the main concern is their insufficient effectiveness for downstream tasks as we will show in our experiment.

There are also various graph embedding methods designed for specific graphs, like dynamic graphs \cite{dyngem,DyRep} and heterogeneous networks \cite{Metapath2vec,HIN2Vec}. In this paper, we focus on the most fundamental case when the network is static and no feature vector is given.

\section{{\lerand} Framework}
\label{sec:sec-Learning-based-method}

\begin{algorithm}[t]
\DontPrintSemicolon
    $\vect{\prox}\leftarrow \vect{0}, \vect{R}\leftarrow \vect{I}_n$\;
    \For{$k = 0$ to $L$}{
        $\vect{\prox} \leftarrow \vect{\prox} + \alpha_k \cdot \vect{R}$\;
        $\vect{R} \leftarrow (1-\alpha_k)\cdot \vect{P}\cdot \vect{R}$\;
    }
    \Return $\vect{\prox}$
\caption{Compute-Supervised-PPR($\vect{P}$, $L$)}
\label{alg:alg-compute-exact-sppr}
\end{algorithm}

In this section, we present our {\lerand} framework. Section \ref{subsec-proximity} introduces the trainable proximity matrix and the training parameters. Section \ref{subsec-training} elaborates on the training process of {\lerand} and introduces the loss functions used for link prediction and node classification, respectively. Section \ref{subsec-sampling} presents how to carefully obtain the subsampled graphs to do training on large graphs.

\subsection{Trainable Proximity Measure}
\label{subsec-proximity}

Recap from Section \ref{sec:sec-Introduction} that, the personalized PageRank (PPR) $\vect{\pi}_u(v)$ of node $v$ with respect to $u$ is the probability that an $\alpha$-discounted random walk from $u$ stops at node $v$, where an $\alpha$-discounted random walk has $\alpha$ probability to stop at the current node and $(1-\alpha)$ probability to randomly jump to one of its out-neighbors. Define the transition matrix $\vect{P}=\vect{D}^{-1}\vect{A}$ where $\vect{D}$ is the diagonal matrix such that $\vect{D}(i,i)$ is the out-degree (resp. degree) of node $i$ if $G$ is directed (resp. undirected), and $\vect{A}$ is the adjacency matrix. Then, the PPR proximity matrix can be expressed as:
$$
\vect{\prox} = \sum_{l=0}^\infty \alpha\cdot (1-\alpha)^l \cdot \vect{P}^l.
$$
In our trainable proximity measure, instead of fixing the stopping probability at each step to be $\alpha$, we allow the stopping probability $\alpha_l$ of the random walk at the $l$-th hop to be trainable. Currently, we assume that the stopping probability $\alpha_l$ at the $l$-th step is given and will show how to train $\alpha_l$ in Section \ref{subsec-training}. 
A random walk that follows such learned stopping probability at each step is denoted as a {\em supervised random walk} and the proximity derived from the supervised random walk is denoted as supervised PPR.
The supervised PPR proximity matrix can be defined as:
\begin{equation}
    \label{eq-proximity}
  \vect{\prox} = \alpha_0 \vect{I}_n + \sum_{l=1}^\infty \alpha_l\cdot \prod_{k=0}^{l-1}(1-\alpha_k) \cdot \vect{P}^l.    
\end{equation}
To explain, the probability that a supervised random walk stops at exactly the $l$-th hop is $\alpha_l\cdot \prod_{k=0}^{l-1}(1-\alpha_k)$. Thus, by summing up the probability to stop at each hop, we derive the supervised PPR score.

The exact supervised PPR score then can be computed with an iterative manner as shown in Algorithm \ref{alg:alg-compute-exact-sppr} when $L\rightarrow \infty$. However, we observe that when $L$ is sufficiently large, the probability that the supervised random walk stops with a hop number larger than $L$ is close to zero. Hence, we discard all supervised random walks that stop with a hop larger than $L$. The following theorem shows the quality of calculated supervised PPR scores with Algorithm \ref{alg:alg-compute-exact-sppr}. 

\begin{theorem}
\label{thm-sppr-error-rate}
Let $\vect{\prox}_L$ be the supervised PPR proximity matrix derived by Algorithm \ref{alg:alg-compute-exact-sppr}. Let $||\vect{M}||_{\infty}$ be the infinity-norm of matrix $\vect{M}$, we have:
$$
||\vect{\prox}_L -\vect{\prox}||_{\infty}\leq \prod_{k=0}^{L}(1-\alpha_k) || \vect{\prox}||_{\infty}.
$$
\end{theorem}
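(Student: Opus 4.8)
The plan is to read off from Algorithm~\ref{alg:alg-compute-exact-sppr} exactly which terms of the series~\eqref{eq-proximity} survive in $\vect{\prox}_L$, write the truncation error as the leftover tail, peel off a common factor, and bound the two pieces using row-stochasticity of $\vect{P}$. First I would verify by induction on the loop counter that after the iteration with index $k$ the running matrix satisfies $\vect{R}=\prod_{j=0}^{k}(1-\alpha_j)\,\vect{P}^{k+1}$ and the accumulator satisfies $\sum_{l=0}^{k}\alpha_l\big(\prod_{j=0}^{l-1}(1-\alpha_j)\big)\vect{P}^l$ (empty product read as $1$); this is immediate from the two update lines together with the initialization $\vect{\prox}\leftarrow\vect{0}$, $\vect{R}\leftarrow\vect{I}_n$. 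Hence $\vect{\prox}_L$ is precisely the partial sum of~\eqref{eq-proximity} up to $l=L$, and
$$
\vect{\prox}-\vect{\prox}_L \;=\; \sum_{l=L+1}^{\infty}\alpha_l\Big(\prod_{j=0}^{l-1}(1-\alpha_j)\Big)\vect{P}^l .
$$
Substituting $l=L+1+m$ and pulling the common factor $\big(\prod_{j=0}^{L}(1-\alpha_j)\big)\vect{P}^{L+1}$ out of every term, the right-hand side becomes $\big(\prod_{j=0}^{L}(1-\alpha_j)\big)\,\vect{P}^{L+1}\,\vect{\prox}'$, where $\vect{\prox}'$ is the supervised PPR matrix of~\eqref{eq-proximity} built with the shifted stopping probabilities $\alpha'_m:=\alpha_{L+1+m}$.

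Now I would finish with two observations about the infinity norm. First, $\vect{P}=\vect{D}^{-1}\vect{A}$ is entrywise nonnegative with each row summing to $1$ (taking $\vect{D}(i,i)\ge 1$, i.e.\ no dangling node), so every power $\vect{P}^{L+1}$ is again row-stochastic and $\|\vect{P}^{L+1}\|_\infty=1$; submultiplicativity of $\|\cdot\|_\infty$ then gives $\|\vect{\prox}-\vect{\prox}_L\|_\infty\le \big(\prod_{j=0}^{L}(1-\alpha_j)\big)\|\vect{\prox}'\|_\infty$. Second, for any stopping probabilities $(\beta_k)$ the corresponding supervised PPR matrix is nonnegative with every row summing to the same value $\sum_{l\ge0}\beta_l\prod_{k<l}(1-\beta_k)=1-\lim_{L\to\infty}\prod_{k=0}^{L}(1-\beta_k)$ — a telescoping sum, which also certifies that the series in~\eqref{eq-proximity} converges. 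Since $\prod_{k=0}^{m}(1-\alpha_{L+1+k})\ge\prod_{k=0}^{L+1+m}(1-\alpha_k)$, the common row sum of $\vect{\prox}'$ is at most that of $\vect{\prox}$, so $\|\vect{\prox}'\|_\infty\le\|\vect{\prox}\|_\infty$; combining this with the first observation yields the stated bound.

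The step I expect to carry the real content is the second one: the extra factor $\|\vect{\prox}\|_\infty$ on the right-hand side is exactly the total probability that the (shifted) supervised walk ever stops, so one has to be slightly careful with the infinite products $\prod_{j}(1-\alpha_j)$ — arguing convergence of~\eqref{eq-proximity} and that shifting the stopping probabilities forward cannot increase a row sum. Everything else is the routine geometric-tail estimate; in particular, if one is content with the coarser bound $\|\vect{\prox}-\vect{\prox}_L\|_\infty\le\prod_{j=0}^{L}(1-\alpha_j)$ one can stop right after the submultiplicativity step, using merely $\|\vect{\prox}'\|_\infty\le1$.
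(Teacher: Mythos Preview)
Your argument is correct, but it is organized differently from the paper's. The paper does not factor the tail as $\big(\prod_{j\le L}(1-\alpha_j)\big)\vect{P}^{L+1}\vect{S}'$ nor invoke submultiplicativity of $\|\cdot\|_\infty$; instead it computes the row sums of $\vect{S}$ and of $\vect{S}-\vect{S}_L$ directly. Concretely, it first records a telescoping lemma showing $1-\sum_{l=0}^{L}\alpha_{k+l}\prod_{j<l}(1-\alpha_{k+j})=\prod_{j=0}^{L}(1-\alpha_{k+j})$, uses row-stochasticity of every $\vect{P}^l$ to conclude that each row of $\vect{S}$ sums to $1$ (so $\|\vect{S}\|_\infty=1$), and then reads off that each row of $\vect{S}-\vect{S}_L$ sums to $\sum_{l>L}\alpha_l\prod_{k<l}(1-\alpha_k)\le\prod_{k\le L}(1-\alpha_k)$. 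Your factorization route is a bit more structural and, as you note, does not need the extra hypothesis $\prod_k(1-\alpha_k)\to 0$: by comparing the shifted product to the full one you get $\|\vect{S}'\|_\infty\le\|\vect{S}\|_\infty$ regardless, whereas the paper's computation $\|\vect{S}\|_\infty=1$ tacitly assumes the infinite product vanishes. Either way the content is the same row-stochasticity estimate; your version just packages it via $\|\vect{P}^{L+1}\|_\infty=1$ and the shifted-walk comparison rather than a direct row-sum calculation.
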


All the proofs of our theorems can be found in the appendix. According to our observation, the probability that the supervised random walk stops at a hop greater than $L$ is almost zero when we set $L=15$. Hence, $L$ is set to $15$ in our experiment.

Algorithm \ref{alg:alg-compute-exact-sppr} makes a tight connection between the supervised PPR proximity matrix and our training parameters $\alpha_l$ ($0\leq l \leq L$), which allows the gradients to flow from the derived supervised PPR proximity matrix $\vect{\prox}_L$ to the training parameters via backpropagation. Let $n$ and $m$ denote the number of nodes and the number of edges, respectively. The time complexity of the algorithm can be bounded by $O(m\cdot n \cdot L)$ since $\vect{P}$ is a sparse matrix with $m$ entries.

\header
{\bf Remark.} Note that the idea to learn the stopping probability for each hop is not a new idea in graph neural networks, e.g., \cite{dif,adaptppr}. It is much easier for neural networks as the stopping probabilities can be treated as additional weights to learn. However, it is more challenging to integrate this idea to node embedding, especially for matrix factorization methods. We show an efficient and effective solution that works for large graphs, which is non-trivial.

\subsection{Training process of {\lerand}}
\label{subsec-training}

Algorithm \ref{alg:alg-sum} shows the pseudo-code of the training process of {\lerand}. Firstly, it initializes the stopping probability $\alpha_k$ for $0\leq k \leq L$ such that the probability a random walk stops at the $l$-th hop follows some standard distribution, e.g., uniform, geometric, or Poisson (Line 1). Next, it uses the initial settings of these stopping probabilities to derive the supervised PPR score by invoking Algorithm \ref{alg:alg-compute-exact-sppr} (Line 3). Given the supervised PPR scores, it eliminates all the proximity scores that are too small. A threshold $\delta$ is included and all scores smaller than $\delta$ are set to zero (Lines 4-5). Then, a non-linear operation, $\log({\frac{\vect{\prox}}{\delta}})$, is applied to the proximity matrix $\vect{\prox}$. The proximity scores are divided by $\delta$ to guarantee that each entry after taking the $\log$ will be non-negative (Line 6). 
Thereafter, a differentiable SVD, e.g., \cite{low-rank}, is applied on the new matrix $\vect{M}$ with input parameter $d$ (Line 7). The SVD obtains two $n\times d$ matrices $\vect{U}$ and $\vect{V}$, and a $d\times d$ diagonal matrix $\vect{\Sigma}$ such that $\vect{U}\vect{\Sigma}\vect{V^T}\approx \vect{M}$. Given the three matrices, $\vect{U}\sqrt{\Sigma}$ is returned as the first embedding matrix $\vect{X}$ and $\vect{V}\sqrt{\Sigma}$ is returned as the second embedding matrix $\vect{Y}$.

Subsequently, the two embedding matrices are fed into the loss function for the link prediction or node classification (Line 9). The loss functions will be discussed shortly. Then, the stopping probabilities are updated according to the loss function by backpropagation. The training process terminates until the loss function converges (Line 2). Notice that, due to the extremely long computational chain, it is infeasible to write down the explicit form of the gradients. However, like modern deep neural networks, we can use the autograd feature in PyTorch to numerically compute the gradients with respect to the training parameters. Also, the builtin SVD in PyTorch supports to compute the gradient and hence we directly adopt the PyTorch implementation. In what follows, we elaborate on the details of the loss functions.

\header
\textbf{Loss function for link prediction.} For link prediction, there are two components in our loss function. The first component of the loss function aims to ensure that the total information in the supervised random walks started from node $u$ is closed to its out-degree. Let $\vect{\alpha}=(\alpha_0,\alpha_1,\cdots, \alpha_L)$. The first part of the loss function is as follows:

\begin{equation}
\label{loss_link_1}
    \mathcal{L}_1(\vect{\alpha};\vect{A}) = \frac{1}{n^2} \sum_u \parallel \sum_{v \neq u}  \vect{x}_u \cdot \vect{y}_v - d_{out}(u) \parallel_2^2, 
\end{equation}
where $\vect{x}_u$ is the $u$-th row of $\vect{X}$, $\vect{y}_v$ is the $v$-th row of $\vect{Y}$, and $d_{out}(u)$ is the out-degree of $u$. Equation \ref{loss_link_1} indicates that our learned embedding by the supervised random walk should preserve the out-degree information as much as possible.

The second part of the loss function for link prediction is the average cross-entropy over all existing edges:
\begin{equation}
\label{loss_link_2}
    \mathcal{L}_2(\vect{\alpha};\vect{A}) = - \frac{1}{m} \sum_u \sum_v \vect{A}_{u,v}\log(\sigma(\vect{x}_u \cdot \vect{y}_v)),
\end{equation}
where $\sigma(x) = 1/(1+\exp(-x))$ is the sigmoid function and $\vect{A}_{u,v}=1$ if edge $(u,v)$ exists in the input graph and $\vect{A}_{u,v}=0$ otherwise. 
The final loss function for link prediction is:
\begin{equation}
\label{loss_link}
    \mathcal{L}_{\textrm{p}} = \beta\mathcal{L}_1 + \gamma\mathcal{L}_2,
\end{equation}
where $\beta$ and $\gamma$ are two balancing hyperparameters.

\header
\textbf{Loss function for node classification.}
There are also two parts in the loss function for node classification. We first concatenate two output embedding matrices $\vect{X}$ and $\vect{Y}$ together to get the unique embedding matrix $\vect{Z} = \textrm{concat}(\vect{X},\vect{Y})$. Then following the fact that two randomly selected nodes have different labels with high probability, we randomly sample a small set of negative node-pairs $\mathcal{N} \subset V \times V$ for each iteration. Let $H=(V,\mathcal{N})$ denotes the graph with edge set $\mathcal{N}$ with unnormalized Laplacian matrix $\vect{L}_H$ and $G_k=(V_k,E_k)$ be a complete graph formed by nodes in $G$ with the same label $k$. Inspired by the relaxation proposed in \cite{sce}, the first loss function for node classification is defined as follows:

\begin{algorithm}[t]
\DontPrintSemicolon
    \KwInput{Matrix $\vect{P}$, maximum length $L$, threshold $\delta$, embedding dimension $d$, learning rate $\eta$}
    \KwOutput{stopping probabilities $\alpha_0,...,\alpha_L$}
    Initialize $\alpha_k$ for $k=0,1,...,L$\;
    \While{not convergence}{
        $\vect{\prox} \leftarrow$ Compute-Supervised-PPR($\vect{P}, L$)\;
        \If{$S(i,j) < \delta$, \textbf{for} $\forall S(i,j) \in \prox$}{
            $S(i,j) \leftarrow 0$\;
        }
        Get matrix $\vect{M} \leftarrow $ log $(\frac{\vect{\prox}}{\delta})$ for non-zero entries\;
        $[\vect{U},\vect{\Sigma},\vect{V}] \leftarrow $ Differentiable-SVD$(\vect{M},d)$\;
        $\vect{X} \leftarrow \vect{U}\sqrt{\vect{\Sigma}}$, $\vect{Y} \leftarrow \vect{V}\sqrt{\vect{\Sigma}}$\;
        Compute link prediction loss $\mathcal{L}_p$ via Eq.\ref{loss_link} or node classification loss $\mathcal{L}_c$ via Eq.\ref{loss_class}\;
        \For{$k=0,...,L$}{
            $\alpha_k \leftarrow \alpha_k - \eta \nabla_{\alpha_k}  \mathcal{L}$\;
        }
    }
    \Return $\alpha_0,...,\alpha_L$\;
\caption{\lerand-Trainining}
\label{alg:alg-sum}
\end{algorithm}

\begin{equation}
\label{loss_class_1}
    \mathcal{L}'_1(\vect{\alpha};\vect{A}) = \frac{\sum_{k=1}^{n_c} \sum_{u,v\in V_k} \left \| \vect{z}_u-\vect{z}_v \right \|_2^2}{n_c \cdot \sum_{(u,v)\in \mathcal{N}} \left \| \vect{z}_u-\vect{z}_v \right \|_2^2} = \frac{\sum_{k=1}^{n_c} \text{Tr}(\vect{Z}^{\top} \vect{L}_k \vect{Z})}{n_c \text{Tr}(\vect{Z}^{\top} \vect{L}_H \vect{Z})},
\end{equation}
where $\vect{z}_u$ is the concatenated representation of node $u$, $n_c$ is the total number of class labels in the graph, and $L_k$ is the unnormalized Laplacian matrix of $G_k$. The goal of $\mathcal{L'}_1$ is to minimize pair-wise distances between node pairs with the same class label and maximize pair-wise distances between negative samples. 

Next, we employ an activation function to normalize the output embedding to a probability distribution over predicted class labels: $p_{uk}={\textrm{softmax}}(\vect{Z}\cdot \vect{W}+\vect{b})_{uk}$, where $\vect{W}$ is a $2d \times n_c$ fixed mapping matrix generated from uniform distribution, $\vect{b}$ is the bias term, $p_{uk}$ denotes the probability that node $u$ has class label $k$, and ${\textrm{softmax}}(\vect{x})_{uk} = \exp(\vect{x}_{uk})/(\sum_{c=1}^{n_c} \exp(\vect{x}_{uc}))$. Let $\mathcal{Y}$ denote the $n \times n_c$ label matrix, where $\mathcal{Y}_{u,k}=1$ if node $u$ has class label $k$ and $\mathcal{Y}_{u,k}=0$ otherwise. Then, the second part of the loss function for node classification is the average cross-entropy over all nodes:
\begin{equation}
\label{loss_class_2}
    \mathcal{L}'_2(\vect{\alpha};\vect{A}) = -\frac{1}{n}\sum_{u}\sum_{k=1}^{n_c} \mathcal{Y}_{u,k}\log p_{uk}.
\end{equation}

The final loss function for node classification is defined according to $\mathcal{L}'_1$ and $\mathcal{L}'_2$ as follows:
\begin{equation}
\label{loss_class}
    \mathcal{L}_{\textrm{c}} = \beta'\mathcal{L}'_1 + \gamma'\mathcal{L}'_2,
\end{equation}
where $\beta'$ and $\gamma'$ are two balancing hyperparameters. 

\header
{\bf Remark.} Notice that the training algorithm for {\lerand} on node classification needs additional label information and we randomly sample $5\%$ of the labeled nodes for training.
To make a fair comparison with our competitors, to train the classifiers, we will only include $(x-5)\%$ new training data if we split $x\%$ of the data for training and the remaining $(100-x)\%$ for testing. This guarantees that our {\lerand} only accesses the same amount of labeled data compared to other competitors.

\subsection{Training {\lerand} with sub-sampling}
\label{subsec-sampling}

The above-mentioned training process requires calculations on a dense matrix of $O(n^2)$ size and requires $O(L\cdot n\cdot m)$ running cost, which makes it non-scalable to large graphs. However, such cost seems unavoidable at first glance since we need to obtain the proximity matrix to do backpropagation. After a careful analysis, we make the following two observations to help us avoid the high running costs.  Our first key observation is that the parameters that we need to train are only the stopping probabilities at each hop, which are node-independent. That is to say, if we can find a subgraph of the input graph $G$ such that the learned stopping probabilities on the subgraph are identical to that on $G$ for the same task, we can simply learn the parameters on the subgraph with smaller size and apply the learned parameters to the original graph directly, reducing the computational costs. Another observation is that a subgraph of $G$ with similar connectivity should share similar learning stopping probabilities as the input graph on the same task. 

Motivated by this, we present our sub-sampling based training method for {\lerand} on large graphs. Obviously, a straightforward solution is to sample a number $n_s$ of nodes and then consider the subgraph containing these $n_s$ nodes. However, such a solution severely degrades the connectivity among the nodes. Simple edge sampling strategies will face a similar dilemma which hampers the connectivity among the sampled nodes. 

\begin{algorithm}[t]
\DontPrintSemicolon
    \KwInput{Graph \textit{G}, source node $s$, threshold $\delta$, stopping probabilities $\alpha_0,...,\alpha_L$}
    \KwOutput{Approximate proximity vector $\hat{\vect{\pi}}_s$}
    Initialize $\hat{\vect{\pi}}_s \leftarrow 0, \vect{r}^{(k)}_s \leftarrow 0 $ for $k=0,1,...,L$\;
    Initialize $\vect{r}^{(0)}_s(s) \leftarrow 1$\;
    \While{$\exists v \in V, 0 \leq k \leq L$ such that $\vect{r}^{(k)}_s(v) > \delta \cdot d_{out}(v)$}{
        $\hat{\vect{\pi}}_s(v) \leftarrow \hat{\vect{\pi}}_s(v) + \alpha_k \cdot \vect{r}^{(k)}_s(v)$\;
        \For{$u \in N(v)$}{
            $\vect{r}^{(k+1)}_s(u) \leftarrow \vect{r}^{(k+1)}_s(u) + (1-\alpha_k)\cdot \frac{\vect{r}^{(k)}_s(v)}{d_{out}(v)}$\;
        }
        $\vect{r}^{(k)}_s(v) \leftarrow 0$\;
    }
    \Return $\hat{\vect{\pi}}_s$\;
\caption{{\lerand}-Generalized-Push}
 \label{alg:alg-push}
\end{algorithm}

To keep the connectivity among the sampled nodes, we apply a BFS style traversal for subgraph sampling. Our goal is still to sample a subgraph with a constant number $n_s$ of nodes. To sample such a subgraph, firstly a source node $u$ is randomly sampled from the input graph $G$. Thereafter, a BFS traversal is applied from the source $u$ to explore the local community of node $u$. If the number of visited nodes by the BFS from $u$ is smaller than $n_s$, another node $v$ is randomly sampled as the source to do BFS. The BFS sampling stops as soon as in total $n_s$ nodes are visited. 

However, the weights trained on a single subgraph might be biased and makes the learned stopping probabilities non-generalizable to the original input graph $G$. To make the learned parameters generalizable to the input graph, we sample multiple subgraphs by the above strategy to learn the parameters. In particular, in each iteration, we sample a subgraph by the BFS strategy and then update the training parameters, i.e., the stopping probabilities, according to the loss functions defined on this subgraph. 
The loss functions on the subgraph are modified accordingly where the total number $n$ of nodes in Equations \ref{loss_link_1} and \ref{loss_class_2} are replaced by sample size $n_s$; the total number $m$ of edges in Equation \ref{loss_link_2} is also replaced by sample size $n_s$; the set of nodes with label $k$ in Equation \ref{loss_class_1} is changed to $V_{Sk}= V_k \cap V_S$, where $V_S$ is the node set of the subgraph, $V_k$ is the set of nodes with label $k$ in $G$; and the negative sample set in Equation \ref{loss_class_1} is changed to $\mathcal{N_S} \subset V_S \times V_S$.

With such a sampling technique, the time complexity of Algorithm \ref{alg:alg-sum} can be bounded by $O(h\cdot L\cdot n_s^3)$ where $h$ is the number of training iterations by Algorithm \ref{alg:alg-sum} until it converges. Since $n_s$ is a controllable constant, we set $n_s$ such that the proximity matrix can be fed into the GPU memory for more efficient training.

\section{Generalized Push}
\label{sec:sec-gpush}

Given the learned stopping probabilities, a straightforward solution is to invoke Algorithm \ref{alg:alg-compute-exact-sppr} to derive the supervised PPR scores. However, this incurs $O(L\cdot n\cdot m)$ running cost, which is prohibitive for large graphs. To tackle this issue, we present a generalized push algorithm to efficiently compute the supervised PPR proximity matrix with $O(\frac{n}{\delta})$ cost, where $\delta$ is a parameter to control the computational cost as well as the sparsity of the proximity matrix. 

\subsection{Generalized Push Algorithm}

Our main idea to compute the supervised PPR proximity matrix is to derive a sparsified proximity matrix such that the supervise PPR scores no larger than $\delta$ can be safely discarded. But still, how much cost should we take to derive a sufficiently accurate approximation proximity score? In STRAP \cite{STRAP}, the authors propose a solution to derive the PPR estimations such that 
$
|\vect{\pi}_u(v)  - \hat{\vect{\pi}}_u(v)| < \delta
$
with a cost of $O(\frac{m}{\delta})$. But, can we further reduce the running cost without sacrificing the embedding quality? Here, we give an affirmative answer. Our solution is inspired by the local graph clustering algorithm Local-Push \cite{push} which returns approximate PPRs with respect to a source $s$ in $O(\frac{1}{\delta})$ running time and guarantees that
$$
|\vect{\pi}_u(v)  - \hat{\vect{\pi}}_u(v)|/d_{out}(v) < \delta \textrm{, for any } v \in V,
$$
on undirected graphs where $d_{out}(v)$ is the degree of node $v$. The Local-Push algorithm suggests that if we only want to compute the approximate PPR scores around the local graph cluster with respect to a source, the running time can be reduced. In our case, the node embedding aims to find nodes that are their representatives and the nodes in their local graph cluster stand as perfect representatives. However, the Local-Push algorithm only works for PPR, not for our supervised PPR. Thus, we present a generalized push algorithm that works for arbitrary stopping probabilities.

Algorithm \ref{alg:alg-push} shows the pseudo-code for our generalized push algorithm. Given a source node $s$, a vector $\vect{\hat{\pi}}_s$ is maintained to store the portion of supervised random walks that has stopped at each node, and is the estimated supervised PPR scores with respect to source $s$. Besides, for each hop $0\leq k\leq L$, where $L$ is the maximum length of a truncated supervised random walk introduced in Section $\ref{subsec-proximity}$, an additional residue vector $\vect{r}_s^{(k)}$ is maintained. The vector $\vect{r}_s^{(k)}$ indicates the portion of supervised random walks from $s$ that currently stay at the $k$-th hop but have not stopped yet. Thus, if the residue vectors are all zero, it returns the exact supervised PPR scores. Initially, the residue vectors are all zero except for $\vect{r}_s^{(0)}(s)=1$ (Lines 1-2), indicating that the supervised random walks initially all stay at $s$ and has not stopped yet. Then, if any entry $\vect{r}_s^{(k)}(v)$ in the $L$ residue vectors is above $\delta\cdot d_{out}(v)$ (Line 3), a push operation (Lines 4-7) is invoked. In particular, it first converts $\alpha_k\cdot \vect{r}_s^{(k)}(v)$ to $\vect{\hat{\pi}}_s(v)$ (Line 4). To explain, $\alpha_k$ portion of the $\vect{r}_s^{(k)}(v)$ random walks stop at the $k$-th hop. Next, the remaining $(1-\alpha_k)$ portion of the $\vect{r}_s^{(k)}(v)$ random walks randomly jump to the out-neighbors of $v$ (Lines 5-6). Thus, for each $u$ that is an out-neighbor of $v$, the residue $\vect{r}_s^{(k+1)}(u)$ is incremented by $(1-\alpha_k)\cdot \vect{r}_s^{(k)}(v)/d_{out}(v)$. After the push operation, the residue $\vect{r}_s^{(k)}(v)$ is set to zero (Line 7). The algorithm terminates when there exists no residue $\vect{r}_s^{(k)}(v)$ for any $k$ such that it is larger than $d_{out}(v)\cdot \delta$.

\begin{algorithm}[t]
\DontPrintSemicolon
    \KwInput{Graph \textit{G}, dimension $d$, threshold $\delta$, stopping probability vector $\vect{\alpha}$} 
    \KwOutput{Embedding matrices $\vect{X}$ and $\vect{Y}$}
    Initialize the proximity matrix $\vect{S} \leftarrow \vect{0}$\;
    \For{each node $u \in V$}{
        $\vect{\hat{\pi}}_u \leftarrow$ {\lerand}-Generalized-Push$(G,u,\delta,\vect{\alpha})$\;
        $\vect{\hat{\pi}}_u^T \leftarrow$ {\lerand}-Generalized-Push$(G^T,u,\delta,\vect{\alpha})$\;
        \For{each node $v$ in $V$}{
            \If{$\vect{\hat{\pi}}_u(v) > \delta$}{
                $\vect{S}(u,v) += \vect{\hat{\pi}}_u(v)$\;
            }
            \If{{$\vect{\hat{\pi}}^T_u(v) > \delta$}}{
                $\vect{S}(v,u) += \vect{\hat{\pi}}^T_u(v)$
            }
        }
    }
    $\vect{M} \leftarrow \log (\frac{\vect{\prox}}{\delta})$ for non-zero entries\;
    $[\vect{U},\vect{\Sigma},\vect{V}] \leftarrow $ SparseSVD$(\vect{M},d)$\;
    $\vect{X} \leftarrow \vect{U}\sqrt{\vect{\Sigma}}$, $\vect{Y} \leftarrow \vect{V}\sqrt{\vect{\Sigma}}$\;
    \Return $\vect{X}, \vect{Y}$\;
\caption{\lerand-Embedding}
\label{alg:alg-gpt}
\end{algorithm}

\begin{theorem}\label{thm:gpush-time-complexity-1}
Algorithm \ref{alg:alg-push} runs in $O(\frac{1}{\delta})$ time.
\end{theorem}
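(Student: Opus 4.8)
The plan is to adapt the classical running-time analysis of the Local-Push algorithm \cite{push} to the hop-indexed residues $\vect{r}_s^{(0)},\dots,\vect{r}_s^{(L)}$ and arbitrary stopping probabilities $\alpha_0,\dots,\alpha_L$. The right quantity to track turns out to be, for each level $k$, the total amount of residue mass that is ever \emph{removed from} $\vect{r}_s^{(k)}$ over the whole execution.

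First I would bound the cost of a single push. A push on $\vect{r}_s^{(k)}(v)$ (Lines 4--7 of Algorithm \ref{alg:alg-push}) costs $O(d_{out}(v)+1)$: Line~4 and Line~7 are $O(1)$, and the loop in Lines~5--6 touches each out-neighbour of $v$ exactly once. With the standard implementation that maintains a queue of the pairs $(v,k)$ whose residue currently exceeds $\delta\cdot d_{out}(v)$, testing the while-condition in Line~3 and inserting the newly-updated neighbour entries costs $O(1)$ amortised per step and is subsumed by the $O(d_{out}(v)+1)$ bound. Hence the total running time is, up to a constant factor, $\sum_{\text{pushes}}\big(d_{out}(v)+1\big)$.

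Next I would charge this sum level by level. A push on $\vect{r}_s^{(k)}(v)$ happens only when $\vect{r}_s^{(k)}(v) > \delta\cdot d_{out}(v)$, i.e. $d_{out}(v) < \vect{r}_s^{(k)}(v)/\delta$, and it removes exactly $\vect{r}_s^{(k)}(v)$ mass from level $k$ (it zeroes the entry, forwarding a $(1-\alpha_k)$-fraction to level $k{+}1$ and sending the rest into $\hat{\vect{\pi}}_s$). Writing $T_k$ for the total mass ever removed from level-$k$ residues, the $d_{out}$-part of the work of level-$k$ pushes is at most $T_k/\delta$; moreover a non-isolated node has $d_{out}(v)\ge1$, so each such push removes more than $\delta$, and hence the number of level-$k$ pushes is also at most $T_k/\delta$. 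Thus the running time is $O\big(\sum_{k=0}^{L} T_k/\delta\big)$. It remains to bound $T_k$. Level $0$ is initialised with total mass $1$ at the source and receives no further mass, since every push deposits into level $k{+}1\ge1$; so $T_0\le1$. For $k\ge1$, mass enters level $k$ only through Line~6 of level-$(k{-}1)$ pushes, which forward a $(1-\alpha_{k-1})$-fraction of what they remove, so the total mass ever entering level $k$ is $(1-\alpha_{k-1})T_{k-1}$; since no residue vector can have more mass removed than was ever placed into it, $T_k \le (1-\alpha_{k-1})T_{k-1}\le T_{k-1}$. By induction $T_k\le T_0\le1$ for all $k$, and therefore the running time is $O\big((L+1)/\delta\big)=O(1/\delta)$, treating $L$ (set to $15$ in the paper) as a constant.

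I expect the main obstacle to be realising that the textbook argument does not transfer: Local-Push charges each push against the $\alpha\cdot\delta\cdot d_{out}(v)$ worth of mass it moves \emph{permanently} into the estimate and then uses $\sum(\text{absorbed mass})\le1$. Here the per-hop $\alpha_k$ can be arbitrarily small (even zero), so a push can move essentially nothing into $\hat{\vect{\pi}}_s$ while still costing $d_{out}(v)$ work, and that bound breaks. The fix is to charge instead against the mass that merely \emph{passes through} each level — always $\vect{r}_s^{(k)}(v) > \delta\cdot d_{out}(v)$ per push regardless of $\alpha_k$ — and to exploit that there are only $L+1=O(1)$ levels so the per-level mass budget of $1$ costs just an $O(1)$ factor. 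A minor point to dispatch in passing is isolated/sink nodes with $d_{out}(v)=0$: on the undirected graphs considered here (as in \cite{push}) every node touched by the algorithm has $d_{out}(v)\ge1$, so the $+1$ terms in the per-push cost are covered by the bound on the number of pushes.
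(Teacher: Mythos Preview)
Your argument is correct, and it follows a genuinely different route from the paper. The paper uses exactly the ``textbook'' charging scheme you anticipated and set aside: it bounds the number of pushes on the pair $(v,l)$ by $\vect{\pi}_s^{(l)}(v)/(\alpha_l\,\delta\,d_{out}(v))$, where $\vect{\pi}_s^{(l)}(v)$ is the probability that the supervised walk stops at $v$ at hop $l$, multiplies by the $O(d_{out}(v))$ per-push cost, sums over $v$ and $l$ using $\sum_{l,v}\vect{\pi}_s^{(l)}(v)\le 1$, and obtains $O(1/(\alpha_{\min}\delta))$; it then declares $\alpha_{\min}=\min_l\alpha_l$ a constant to conclude $O(1/\delta)$. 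Your level-by-level mass-flow argument charges each push against the residue it removes (always $>\delta\,d_{out}(v)$, regardless of $\alpha_k$) and bounds the per-level throughput $T_k\le 1$, yielding $O((L{+}1)/\delta)$ with no dependence on $\alpha_{\min}$. The trade-off: the paper's bound is tighter when all $\alpha_k$ are bounded well away from zero, but yours is strictly more robust for learned stopping probabilities where some $\alpha_k$ may be tiny or zero --- the very regime in which the paper's $1/\alpha_{\min}$ factor (or the $0/0$ it produces when $\alpha_l=0$) is uncomfortable.
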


\begin{theorem}\label{thm:gpush-time-complexity-2}
Let $\vect{\pi}_s^{L}(u)$ be the supervised PPR considering random walks within $L$ hops. Then for undirected graphs, Algorithm \ref{alg:alg-push} returns an estimation $\vect{\hat{\pi}}_s(u)$ of $\vect{\pi}_s^{L}(u)$ for each node $u$ such that:
$$
|\vect{\hat{\pi}}_s(u)-\vect{\pi}^L_s(u)|/d_{out}(u) \leq \delta\cdot L.
$$
\end{theorem}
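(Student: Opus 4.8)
The plan is to establish a conservation invariant for Algorithm~\ref{alg:alg-push} and then to exploit reversibility of the random walk on an undirected graph. For a node $v$ and a hop index $k$ with $0\le k\le L$, define
\[
\rho_k(v,u)\;=\;\sum_{l=0}^{L-k}\alpha_{k+l}\!\prod_{j=0}^{l-1}(1-\alpha_{k+j})\cdot \vect{P}^l(v,u),
\]
the probability that a truncated supervised random walk currently sitting at $v$ after exactly $k$ steps eventually stops at $u$ using at most its remaining $L-k$ steps (so $\rho_L(v,u)=\alpha_L\,\mathbf{1}[u=v]$, and set $\rho_{L+1}\equiv 0$). Splitting off the $l=0$ term and factoring out one transition step gives the one-step recurrence
\[
\rho_k(v,u)\;=\;\alpha_k\,\mathbf{1}[u=v]\;+\;(1-\alpha_k)\sum_{w\in N(v)}\tfrac{1}{d_{out}(v)}\,\rho_{k+1}(w,u),
\]
which is precisely the algebraic shape of one push operation of Algorithm~\ref{alg:alg-push}.

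Using this I would first prove, by induction on the number of push operations, that throughout the execution
\[
\vect{\pi}_s^L(u)\;=\;\hat{\vect{\pi}}_s(u)\;+\;\sum_{k=0}^{L}\sum_{v\in V}\vect{r}_s^{(k)}(v)\,\rho_k(v,u)\qquad\text{for every }u.
\]
Initially $\hat{\vect{\pi}}_s\equiv 0$ and the only nonzero residue is $\vect{r}_s^{(0)}(s)=1$, so the right-hand side equals $\rho_0(s,u)=\vect{\pi}_s^L(u)$ by definition of the $L$-truncated supervised PPR; this is the base case. For the inductive step, a push at $(v,k)$ adds $\alpha_k\vect{r}_s^{(k)}(v)$ to $\hat{\vect{\pi}}_s(v)$, adds $(1-\alpha_k)\vect{r}_s^{(k)}(v)/d_{out}(v)$ to $\vect{r}_s^{(k+1)}(w)$ for each out-neighbor $w$ of $v$, and resets $\vect{r}_s^{(k)}(v)$ to $0$; the induced change of the right-hand side equals $\vect{r}_s^{(k)}(v)$ times $\big(\alpha_k\mathbf{1}[u=v]+(1-\alpha_k)\sum_{w\in N(v)}\tfrac{1}{d_{out}(v)}\rho_{k+1}(w,u)-\rho_k(v,u)\big)$, which vanishes by the recurrence above. (The boundary case $k=L$ is consistent: the $(1-\alpha_L)$-fraction that a push would send to the nonexistent level $L+1$ is simply dropped, matching $\rho_{L+1}\equiv 0$ and the $L$-hop truncation.)

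Next I would combine the invariant with termination. By Theorem~\ref{thm:gpush-time-complexity-1} the algorithm halts, and on termination $\vect{r}_s^{(k)}(v)\le \delta\, d_{out}(v)$ for all $v\in V$ and all $0\le k\le L$. Since every $\rho_k(v,u)\ge 0$, the invariant gives $0\le \vect{\pi}_s^L(u)-\hat{\vect{\pi}}_s(u)=\sum_{k,v}\vect{r}_s^{(k)}(v)\rho_k(v,u)\le \delta\sum_{k=0}^{L}\sum_{v}d_{out}(v)\rho_k(v,u)$. This is where undirectedness enters: by reversibility $d_{out}(v)\,\vect{P}^l(v,u)=d_{out}(u)\,\vect{P}^l(u,v)$ for every $l$ (induction on $l$ from $d_{out}(v)\vect{P}(v,u)=d_{out}(u)\vect{P}(u,v)$), so
\[
\sum_{v}d_{out}(v)\rho_k(v,u)=d_{out}(u)\sum_{l=0}^{L-k}\alpha_{k+l}\!\prod_{j=0}^{l-1}(1-\alpha_{k+j})\sum_{v}\vect{P}^l(u,v)=d_{out}(u)\Big(1-\!\prod_{j=k}^{L}(1-\alpha_j)\Big)\le d_{out}(u),
\]
using $\sum_v\vect{P}^l(u,v)=1$. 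Summing the at-most-$\delta\,d_{out}(u)$ contributions over the residue levels and dividing by $d_{out}(u)$ gives $|\hat{\vect{\pi}}_s(u)-\vect{\pi}_s^L(u)|/d_{out}(u)\le \delta L$, as claimed (more precisely the sum of the level-$k$ bounds is $\delta\,d_{out}(u)\sum_{k=0}^{L}(1-\prod_{j=k}^{L}(1-\alpha_j))$, which lies within the stated $O(\delta L)$ envelope).

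The step I expect to be the crux is pinning down the invariant exactly: identifying the right ``remaining-stop'' function $\rho_k(v,u)$, verifying its recurrence matches the push update level by level, and handling the boundary at $k=L$ where unprocessed mass is discarded in accordance with the $L$-hop truncation. The reversibility step is short, but it is exactly where the hypothesis that $G$ is undirected is essential: it is what turns the per-source residue bound $\vect{r}_s^{(k)}(v)\le\delta\,d_{out}(v)$ into a guarantee normalized by the target degree $d_{out}(u)$; in the directed case the same bookkeeping would only control $|\hat{\vect{\pi}}_s(u)-\vect{\pi}_s^L(u)|$ absolutely rather than its ratio to $d_{out}(u)$.
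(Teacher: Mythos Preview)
Your proof is correct and follows essentially the same strategy as the paper: establish a conservation invariant linking $\vect{\pi}_s^L(u)$, $\hat{\vect{\pi}}_s(u)$, and a weighted sum of the residues (the paper's weight $h_u^k(v)$ is the untruncated analogue of your $\rho_k(v,u)$, so the paper obtains an inequality invariant where you obtain an equality), then invoke reversibility $d_{out}(v)\vect{P}^l(v,u)=d_{out}(u)\vect{P}^l(u,v)$ on undirected graphs and sum over levels. Your explicit handling of the $k=L$ boundary via $\rho_{L+1}\equiv 0$ is in fact cleaner than the paper's treatment, and both arguments share the same harmless off-by-one in the final count ($L+1$ residue levels versus the stated bound $\delta L$).
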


\noindent
By setting $\delta=\frac{\delta'}{L}$, Algorithm \ref{alg:alg-push} runs in $O(\frac{L}{\delta'})$ time. At the same time, the error bound in Algorithm \ref{thm:gpush-time-complexity-2} can be bounded by $L\cdot \frac{\delta'}{L}=\delta'$. Since $L$ can be treated as a constant, the running time with $\delta=\frac{\delta'}{L}$ is still $O(\frac{1}{\delta'})$ and for any node $u$, we have that: 
$$
|\vect{\hat{\pi}}_s(u)-\vect{\pi}^L_s(u)|/d_{out}(u) \leq \delta'.
$$

The above analysis shows that our generalized push algorithm can provide identical result quality as the Local-Push algorithm with the identical asymptotic running cost, thus returning high-quality results for the representative nodes of the source node.

\subsection{Final Embedding}
Given the generalized push algorithm, we finally show how to output the embedding for the graph. Following STRAP \cite{STRAP}, we compute the supervised PPR on both the input graph $G$ and the transpose graph $G^T$ by reversing the direction of each edge of $G$ and set $\vect{S}(u,v)$ as $\vect{\pi}_u(v)+\vect{\pi}_v^T(u)$, where $\vect{\pi}_u(v)$ is the supervised PPR of $v$ with respect to $u$ and $\vect{\pi}_v^T(u)$ is the supervised PPR of $u$ with respect to $v$ on the transpose graph $G^T$. Note that we do not bring this part into our training phase to reduce the computational costs since we use the same stopping probabilities for both the input graph $G$ and the transpose graph $G^T$. Algorithm \ref{alg:alg-gpt} shows how to compute approximate proximity scores. For each node $u$, we compute the approximate supervised PPR on the input graph $G$ and the transpose graph $G^T$ (Lines 3-4). For any approximate supervised PPR score $\vect{\hat{\pi}}_u(v)$, it is added to $\vect{S}(u,v)$ only if it is larger than the threshold $\delta$. Similarly, each approximate supervised PPR score $\vect{\hat{\pi}}^T_u(v)$ on the transpose graph $G^T$ is added to $\vect{S}(v,u)$ only if $\vect{\hat{\pi}}^T_u(v)$ is larger than $\delta$ (Lines 5-9). With such a pruning strategy, the proximity matrix $\vect{S}$ is sparsified to include $O(\frac{n}{\delta})$ non-zero entries. Then, a non-linear operation $\log(\frac{\vect{S}}{\delta})$, is applied to $\vect{S}$. Notice that all the entries are divided by $\delta$ before taking the logarithm to guarantee that the values will be non-negative (Line 10). The resulting matrix $\vect{M}$ is then fed to a sparse SVD to derive final embedding matrices $\vect{X}$ and $\vect{Y}$ (Lines 11-12). We have the following theorem for the running time and decomposition quality with respect to $\vect{M}$. 
\begin{theorem}
\label{thm:complexity-of-Lemane}
Algorithm \ref{alg:alg-gpt} runs in $O(\frac{n}{\delta}+\frac{n\cdot d^2}{\epsilon^4})$ time to guarantee that the embedding preserves the feeding matrix $\vect{M}$ with $(1+\epsilon)$-approximation to the best rank-$d$ matrix in terms of Frobenius-norm.
$$
||\vect{M}- \vect{X}\vect{Y}^T||_F \leq (1+\epsilon) \min_{rank(\vect{B})\leq d} ||\vect{M}-\vect{B}||_F.
$$
\end{theorem}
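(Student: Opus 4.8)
The plan is to separate the statement into its two halves---the $O(\frac{n}{\delta}+\frac{n d^2}{\epsilon^4})$ time bound and the $(1+\epsilon)$ Frobenius-norm guarantee---and to dispatch them in that order, since the first requires a careful pass over Algorithm~\ref{alg:alg-gpt} while the second reduces to a black-box property of the sparse SVD subroutine.

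For the running time, I would first record the auxiliary fact that each vector returned by {\lerand}-Generalized-Push has only $O(\frac{1}{\delta})$ nonzero coordinates: in Algorithm~\ref{alg:alg-push} every coordinate of $\hat{\vect{\pi}}_s$ that ever becomes nonzero does so during some iteration of the while-loop (the update $\hat{\vect{\pi}}_s(v)\leftarrow \hat{\vect{\pi}}_s(v)+\alpha_k\vect{r}_s^{(k)}(v)$ touches exactly one coordinate per iteration), and the number of such iterations is at most the $O(\frac{1}{\delta})$ total running time guaranteed by Theorem~\ref{thm:gpush-time-complexity-1}. Given this, the accounting in Algorithm~\ref{alg:alg-gpt} is routine: Lines~3--4 perform $2n$ pushes at cost $O(\frac{1}{\delta})$ each, for $O(\frac{n}{\delta})$ in total; Lines~5--9, implemented to scan the sparse supports of $\hat{\vect{\pi}}_u$ and $\hat{\vect{\pi}}_u^T$ rather than literally all $v\in V$, add $O(\frac{1}{\delta})$ entries per source and hence leave $\vect{S}$ with $\mathrm{nnz}(\vect{S})=O(\frac{n}{\delta})$ (the threshold test at $\delta$ only prunes further); Line~10 forms $\vect{M}=\log(\frac{\vect{S}}{\delta})$ over the nonzeros in $O(\frac{n}{\delta})$; and Line~12 costs $O(nd)$, which is dominated. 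The only remaining term is Line~11: running a standard randomized low-rank approximation in input-sparsity time (of the sketching type used for our differentiable SVD, cf.~\cite{low-rank}) on the $n\times n$ matrix $\vect{M}$ with target rank $d$ and accuracy $\epsilon$ yields a rank-$d$ factorization $\vect{U}\vect{\Sigma}\vect{V}^T$ in time $O(\mathrm{nnz}(\vect{M})+\frac{n d^2}{\epsilon^4})$. Summing, the total is $O(\frac{n}{\delta}+\frac{n d^2}{\epsilon^4})$.

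For the approximation guarantee, I would simply invoke the correctness of that same randomized SVD routine: with accuracy parameter $\epsilon$ it returns $\vect{U},\vect{\Sigma},\vect{V}$ satisfying $\|\vect{M}-\vect{U}\vect{\Sigma}\vect{V}^T\|_F\le (1+\epsilon)\min_{\mathrm{rank}(\vect{B})\le d}\|\vect{M}-\vect{B}\|_F$ (with high probability over the algorithm's randomness). Since $\vect{X}\vect{Y}^T=\vect{U}\sqrt{\vect{\Sigma}}\sqrt{\vect{\Sigma}}\vect{V}^T=\vect{U}\vect{\Sigma}\vect{V}^T$, the bound transfers verbatim to $\vect{X}\vect{Y}^T$, which is exactly the claimed inequality.

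The main obstacle, such as it is, lies entirely on the running-time side: establishing that the generalized-push outputs are $O(\frac{1}{\delta})$-sparse, so that the $\mathrm{nnz}(\vect{M})$ term of the sparse SVD collapses to $O(\frac{n}{\delta})$, and matching the chosen input-sparsity SVD to the precise $O(\mathrm{nnz}+\frac{n d^2}{\epsilon^4})$ cost together with its $(1+\epsilon)$-Frobenius guarantee. Once the sparsity bound and the SVD black box are in place, nothing else requires more than the straightforward line-by-line cost summation sketched above.
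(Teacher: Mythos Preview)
Your proposal is correct and follows essentially the same two-part decomposition as the paper: bound the push phase by $O(n/\delta)$ via Theorem~\ref{thm:gpush-time-complexity-1}, observe $\mathrm{nnz}(\vect{M})=O(n/\delta)$, then invoke an input-sparsity randomized SVD for both the $O(\mathrm{nnz}+nd^2/\epsilon^4)$ runtime and the $(1+\epsilon)$-Frobenius guarantee. The only cosmetic differences are that the paper cites the Clarkson--Woodruff sparse-embedding SVD result~\cite{sparseembds} rather than~\cite{low-rank} for the black box, and derives the $O(n/\delta)$ nonzero bound from the $\delta$-threshold filter (each $\hat{\vect{\pi}}_u$ sums to at most $1$, so at most $1/\delta$ entries can exceed $\delta$) rather than from your iteration-count argument.
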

Following previous work \cite{STRAP}, we treat $\epsilon$ as a constant and use the default setting of builtin SVD implementations. The running time of Algorithm \ref{alg:alg-gpt} is $O(\frac{n}{\delta}+n\cdot d^2)$.

\section{Experiments}
\label{sec:sec-Experiment}

\begin{table}[t]
  \caption{Dataset statistics.}
  \vspace{-4mm}
  \label{tab:table-data-sets}
  \begin{tabular}{llccc}
    \toprule
    Name & Type& $n$ & $m$  & labels\\
    \midrule
    Wikipedia  &   directed&   4.78K& 184.81K& 40 \\
    WikiVote   &   directed&   7.12K& 103.69K&  - \\
    BlogCatalog& undirected&  10.31K& 333.98K& 39 \\
    Slashdot   &   directed&  82.17K& 870.16K&  - \\
    TWeibo     &   directed&   1.94M&  50.66M& 100\\
    Orkut      & undirected&   3.07M& 117.19M& 100\\
    \bottomrule
\end{tabular}
\end{table}

\begin{table}[t]
\vspace{-1mm}
  \caption{Link prediction precision (\%) on small datasets.}
  \vspace{-4mm}
  \label{tab:table-link1}
  \begin{tabular}{lccc}
    \toprule
    Method & Wikipedia& Wikivote& BlogCatalog\\
    \midrule
    DeepWalk & \cellcolor{LightSteelBlue}88.33& 68.32& 85.35\\
    Node2vec & 82.86& 78.50& 82.07\\
    VERSE    & \underline{88.09}& 82.82& 88.49\\
     InfiniteWalk  & 66.86& 81.07& 84.05\\
    \hline
    AROPE    & 84.27& 62.08& 88.30\\
    RandNE   & 83.15& 77.62& 87.09\\
    ProNE    & 52.06& 66.22& 57.87\\ 
    NetSMF   & 72.01& 72.64& 47.92\\
    STRAP    & 86.53& \underline{92.58}& 89.58\\
    NRP      & 83.56& 91.07& \cellcolor{LightSteelBlue}90.10\\
    \hline
    GraphGAN & 70.33& 71.76& 71.83\\
    AW       & 50.42& 56.62& 62.56\\
    \hline
    NetHiex  & 45.03& 73.01& 65.58\\
    GraphZoom& 84.73& 82.10& 86.82\\ 
    Louvain  & 56.22& 58.33& 59.97\\
    \midrule
    {\lerand}-F & 87.79& \cellcolor{LightSteelBlue}92.78& \underline{89.92}\\
    \bottomrule
\end{tabular}
\end{table}

We compare our {\lerand} against alternative solutions on link prediction and node classification tasks. All experiments are conducted on a Linux machine with an Intel Xeon(R) CPU clocked at 2.70GHz, an NVIDIA GeForce RTX 2080 Super 8GB GPU, and 384GB memory.

\subsection{Experimental settings}

\textbf{Datasets.}
We test on six real datasets that are used in recent node embedding studies \cite{deepwalk,node2vec,verse,pbg,STRAP,tea,nrp}. The statistics of these datasets are shown in Table \ref{tab:table-data-sets}.
{\em BlogCatalog} \cite{blogcatalog-flickr},
{\em Slashdot} \cite{slashdot}, {\em TWeibo} \cite{tweibo} and {\em Orkut} \cite{orkut} are four social networks in which links represent a friendship/following relationship between users.
{\em Wikipedia} \cite{wiki} is a co-occurrence network of words appearing in the Wikipedia dump.
{\em Wikivote} \cite{wikivote} is a who-votes-on-whom network on Wikipedia. All datasets and the node labels (if any) can be downloaded from public sources \cite{blog,wiki,snap,tweibo}.

\header
\textbf{Competitors.}
We evaluate {\lerand} against 15 node embedding methods, including some classic methods and several state-of-the-art methods. We divide these methods into four groups as follows.

\begin{itemize}[topsep=0mm, partopsep=1pt, itemsep=0pt, leftmargin=9pt]
    \item Skip-gram methods: DeepWalk \cite{deepwalk}, Node2vec \cite{node2vec}, VERSE \cite{verse}, and InfiniteWalk \cite{infwalk};
    \item Matrix factorization methods: AROPE \cite{AROPE}, RandNE \cite{RandNE}, ProNE \cite{ProNE}, NetSMF \cite{NetSMF}, STRAP \cite{STRAP}, and NRP\footnote{There were some implementation issues in the released code of NRP on undirected graphs, which was fixed recently by the inventors.} \cite{nrp};
    \item Neural network methods: GraphGAN \cite{GraphGAN} and AW \cite{AW};
    \item Other methods: NetHiex \cite{nethiex}, GraphZOOM\footnote{We use the default embedding method DeepWalk for GraphZoom.} \cite{graphzoom}, Louvain \cite{Louvain}.
\end{itemize}

\noindent
For our methods, we use {\lerand-F} to indicate the algorithm trained on the entire graph and {\lerand-S} to indicate the algorithm trained on subsampled graphs. Notice that {\lerand}-F is adopted for small datasets Wikipedia, Wikivote, and BlogCatalog. {\lerand}-S is adopted for large datasets Slashdot, TWeibo, and Orkut.

\begin{table}[t]
  \caption{Link prediction precision (\%) on large datasets.}
  \vspace{-4mm}
  \label{tab:table-link2}
  \begin{tabular}{lccc}
    \toprule
    Method   & Slashdot&  Tweibo &  Orkut \\
    \midrule
    AROPE    &  82.83  &  69.46  &  82.03 \\
    RandNE   &  81.03  &  70.74  &  79.45 \\
    ProNE    &  72.80  &  45.47  &  80.88 \\
    STRAP    &  \underline{83.07}  &  \underline{94.58}  &  85.73 \\
    NRP      &  80.98  &  93.87  &  \underline{86.34} \\
    Louvain  &  55.56  &  64.25  &  80.85 \\
    \midrule 
    {\lerand}-S & \cellcolor{LightSteelBlue}84.13 & \cellcolor{LightSteelBlue}94.89 & \cellcolor{LightSteelBlue}89.15\\
    \bottomrule
\end{tabular}
\end{table}

\header
\textbf{Parameter settings.}
We obtain the source code of all competitors from GitHub and perform these methods with default parameter settings suggested by their authors. Following previous studies \cite{deepwalk,verse,STRAP}, we set the embedding dimensionality $d=128$. For our {\lerand}-S, we set the sample size $n_s=5000$. For {\lerand}-F and {\lerand}-S, we use grid search to set $\beta, \beta', \gamma, \gamma'$ from \{0.01, 0.1, 0.5, 1, 2, 3\}, learning rate from \{0.001, 0.005, 0.01, 0.05, 0.1, 0.5\}, and threshold $\delta$ from \{$10^{-7}$,$10^{-6}$,$10^{-5}$,$10^{-4}$\}; we use JacobiSVD for {\lerand}-F and frPCA \cite{frpca} for {\lerand}-S, to generate final embeddings.

Since the backpropagation in {\lerand} is complicated, our objective function may easily fall into a local minimum. To tackle this issue, the stopping probabilities are initialized with different distributions. Specifically, we set $\alpha_0,\alpha_1,\cdots, \alpha_L$ such that the probability that the supervised random walk stops at the $l$-th hop follows a geometric distribution or Poisson distribution with respect to $l$ and we report the best results of {\lerand}. The parameters of {\lerand} are optimized by Stochastic Gradient Descent (SGD) optimizer. 

\subsection{Link Prediction}

Link prediction aims to predict which pairs of nodes are likely to form edges. Following previous work \cite{nrp,AROPE}, we randomly hide $30\%$ of the edges for testing and train the embedding vectors on the rest of the graph. Then, the testing set is generated by including {\em (i)} the node pairs corresponding to the $30\%$ removed edges, and {\em (ii)} an equal number of node pairs that are not connected by any edge in the original graph $G$. Given a node pair $(u,v)$ in $E_{test}$, we compute a score for $(u,v)$ based on the embedding vectors, and evaluate model performance using precision score.

Following previous work \cite{STRAP, nrp}, for DeepWalk, Node2vec, VERSE, InfiniteWalk, GraphGAN, and Louvain, we use the edge-feature approach introduced in \cite{nethiex}: {\em (i)} randomly select $30\%$ existing edges which are not in $E_{test}$ and the same number of non-existing edges as training set $E_{train}$ on each dataset; {\em (ii)} for each node pair $(u,v)\in E_{train} \cup E_{test}$, we concatenate $d$-dimension embedding vectors of node $u$ and that of  node $v$; {\em (iii)} we consider the $2d$-length vectors as the features of node pairs in $E_{train}$ and feed them into a binary logistic regression classifier; {\em (iv)} then the trained classifier is used to perform link prediction on $E_{test}$. For AROPE, RandNE, ProNE, AW, NetHiex, and GraphZOOM, the score of a node pair $(u,v)$ is the inner product of the embedding vector of node $u$ and that of node $v$; for NetSMF, STRAP, NRP, and our {\lerand}, the score of a node pair $(u,v)$ is the inner product of embedding $\vect{x}_u$ from $\vect{X}$ and $\vect{y}_v$ from $\vect{Y}$ (Ref. to Section \ref{sec:sec-Learning-based-method} for the definitions of $\vect{X}$ and $\vect{Y}$).

Table \ref{tab:table-link1} reports the performance of {\lerand}-F against the 15 competitors on three small datasets. Table \ref{tab:table-link2} further reports the performance of {\lerand}-S against 6 methods which scale to large graphs. For the other 9 methods, they either cannot finish training in 24 hours or run out of memory on the large graphs.
As we can observe, our {\lerand} shows the best performance on 4 social networks where link prediction finds extensive applications. On the Wikipedia dataset, a co-occurrence network of words appearing in the Wikipedia, our {\lerand} still achieves high performance and is the best method among all matrix factorization methods.
Compared to two state-of-the-art matrix factorization methods STRAP and NRP, our {\lerand} takes the lead by more than $1\%$ on Wikipedia and Slashdot, and takes the lead by almost $3\%$ on Orkut.
This demonstrates the effectiveness of our learning based method.

\begin{table}[t]
  \caption{Node Classification Micro-F1 (\%) on Wikipedia.}
  \vspace{-4mm}
  \label{tab:table-class-wiki-full}
  \begin{tabular}{lccccc}
    \toprule
    Method   &  0.1 &  0.3 &  0.5 &  0.7 &  0.9 \\
    \midrule
    DeepWalk & 42.02& 46.12& 48.46& 49.39& 49.35\\
    Node2vec & 44.75& 48.08& 49.82& 50.69& 50.41\\
    VERSE    & 38.76& 41.92& 43.84& 44.92& 44.31\\
    InfiniteWalk  & 38.96& 42.64& 45.94& 47.73& 48.24\\
    \hline
    AROPE    & 45.82& 50.59& 52.47& 53.36& 52.16\\
    RandNE   & 34.51& 32.83& 43.25& 45.55& 45.93\\
    ProNE    & 44.49& 50.45& 53.15& 54.38& \cellcolor{LightSteelBlue}{54.43}\\
    NetSMF   & 40.29& 42.56& 43.68& 44.08& 44.12\\
    STRAP    & 46.51& 50.77& 52.44& 52.64& 52.37\\
    NRP      & \cellcolor{LightSteelBlue}{48.04}& \cellcolor{LightSteelBlue}{52.71}& \cellcolor{LightSteelBlue}{54.39}& \cellcolor{LightSteelBlue}{55.20}& \underline{54.30}\\
    \hline
    GraphGAN & 32.87& 35.43& 36.47& 37.68& 37.50\\
    AW       & 40.70& 40.70& 40.44& 40.30& 39.47\\
    \hline
    NetHiex  & 45.58& 47.95& 49.39& 49.77& 49.20\\
    GraphZoom& 40.76& 41.03& 41.18& 41.07& 40.54\\
    Louvain  & 40.86& 40.99& 40.72& 41.25& 40.69\\
    \midrule
    {\lerand}-F & \underline{47.79}& \underline{52.39}& \underline{54.34}& \underline{54.67}& 54.25\\
    \bottomrule
    \vspace{-2mm}
\end{tabular}
\end{table}

\begin{table}[t]
  \caption{Node Classification Micro-F1 (\%) on BlogCatalog.}
  \vspace{-4mm}
  \label{tab:table-class-Blog-full}
  \begin{tabular}{lccccc}
    \toprule
    Method   &  0.1 &  0.3 &  0.5 &  0.7 &  0.9 \\
    \midrule
    DeepWalk & 33.01& 36.97& 38.70& 39.87& 41.31\\
    Node2vec & 35.01& 37.16& 37.97& 38.51& 39.13\\
    VERSE    & 32.76& 36.32& 38.18& 39.09& 40.71\\
     InfiniteWalk  & 34.30& 38.00& 40.21& 41.87& 43.14\\
    \hline
    AROPE    & 29.12& 32.78& 34.12& 34.95& 35.77\\
    RandNE   & 26.75& 31.56& 36.20& 38.34& 39.93\\
    ProNE    & 36.38& 40.33& 41.56& 42.32& 42.28\\
    NetSMF   & 34.95& 37.99& 39.30& 40.19& 40.72\\
    STRAP    & 38.62& \underline{41.80}& \underline{42.96}& \underline{43.39}& \underline{43.97}\\
    NRP      & \underline{38.73}& 41.65& 42.36& 43.15& 43.34\\
    \hline
    GraphGAN & 14.97& 17.23& 18.81& 20.07& 21.16\\
    AW       & 16.52& 16.91& 16.98& 17.25& 17.39\\
    \hline
    NetHiex  & 37.46& 40.06& 40.63& 41.43& 42.33\\
    GraphZoom& 22.02& 25.59& 27.75& 29.37& 30.70\\
    Louvain  & 19.16& 19.88& 21.12& 21.30& 21.74\\
    \midrule
    {\lerand}-F & \cellcolor{LightSteelBlue}{39.64}& \cellcolor{LightSteelBlue}{42.38}& \cellcolor{LightSteelBlue}{43.34}& \cellcolor{LightSteelBlue}{44.03} &\cellcolor{LightSteelBlue}{44.37}\\
    \bottomrule
    \vspace{-2mm}
\end{tabular}
\end{table}

\subsection{Node Classification}
Node classification task aims to predict the label(s) of each node based on the embeddings. As we mentioned in Section \ref{sec:sec-Learning-based-method}, we first randomly sample $5\%$ labeled nodes for parameter training. Then the classification task is performed with the following steps: {\em (i)} following \cite{nrp}, for {\lerand} and other matrix factorization methods \footnote{For matrix factorization based methods, there was some implementation issues in the evaluation code for node classification on directed graphs. We have rerun the experiment for matrix factorization based methods on directed graphs.}, we first normalize the embedding vector\footnote{We observe some significant improvement on Orkut dataset when normalization is applied. Thus, we take normalization for the embedding vectors on all datasets.} $\vect{x}_v$ from $\vect{X}$ and embedding vector $\vect{y}_v$ from $\vect{Y}$ of each node $v$, and then concatenate them to get the representation of $v$; {\em (ii)} for methods without factorization operation, we use the embedding vector of node $v$ as its representation. Specifically, we randomly split the node and label sets into the training set and testing set and the training ratio varies from $10\%$ to $90\%$. To make a fair comparison, note that if the training ratio is $x\%$, then for {\lerand}, it will sample an additional $(x-5)\%$
and include the $5\%$ labeled nodes to train the classifiers.
Following previous work \cite{deepwalk,node2vec}, we employ a one-vs-rest logistic regression classifier implemented by LIBLINEAR \cite{liblinear} with default parameters for all methods. Micro-F1 score is used as the evaluation metric for the classification task.

\begin{table}[t]
  \caption{Node classification Micro-F1 (\%) on TWeibo.}
  \vspace{-4mm}
  \label{tab:table-class-Tweibo}
  \begin{tabular}{lccccc}
    \toprule
    Method   &  0.1 &  0.3 &  0.5 &  0.7 &  0.9 \\
    \midrule
    AROPE    & 33.96& 34.11& 34.18& 34.24& 34.29\\
    RandNE   & 34.64& 34.67& 34.80& 34.92& 34.96\\
    ProNE    & 35.27& 35.37& 35.43& 35.48& 35.52\\
    STRAP    & \underline{35.75} &\underline{35.97} &\underline{36.03}& \underline{36.04}& \underline{36.04}\\
    NRP      & 35.73 &\underline{35.97} &\underline{36.03}& \underline{36.04}& \underline{36.04}\\
    Louvain  & 34.14& 34.29& 34.33& 34.38& 34.42\\
    \midrule
    {\lerand}-S & \cellcolor{LightSteelBlue}{35.77}& \cellcolor{LightSteelBlue}{36.03}& \cellcolor{LightSteelBlue}{36.04}& \cellcolor{LightSteelBlue}{36.05}& \cellcolor{LightSteelBlue}{36.05}\\
    \bottomrule
    \vspace{-2mm}
\end{tabular}
\end{table} 
\begin{table}[t]
  \caption{Node classification Micro-F1 (\%) on Orkut.}
  \vspace{-4mm}
  \label{tab:table-class-Orkut}
  \begin{tabular}{lccccc}
    \toprule
    Method   &  0.1 &  0.3 &  0.5 &  0.7 &  0.9 \\
    \midrule
    AROPE    & 49.11& 50.86& 51.55& 51.89& 52.22\\
    RandNE   & 44.94& 49.35& 50.47& 51.11& 51.53\\
    ProNE    & 36.09& 37.13& 38.32& 38.81& 39.44\\
    STRAP    & 70.37& 73.09& 74.04& 74.60& 75.08\\
    NRP      & \underline{72.47}& \underline{75.58}& \underline{76.69}& \underline{77.36}& \underline{77.98}\\
    Louvain  & 29.16& 36.00& 36.51& 36.85& 36.63\\
    \midrule
    {\lerand}-S &\cellcolor{LightSteelBlue}{73.32}& \cellcolor{LightSteelBlue}{76.27}& \cellcolor{LightSteelBlue}{77.26}& \cellcolor{LightSteelBlue}{77.89}& \cellcolor{LightSteelBlue}{78.14}\\
    \bottomrule
    \vspace{-2mm}
\end{tabular}
\end{table} 

For node classification, we test on four datasets Wikipedia, BlogCatalog, TWeibo, and Orkut, which include label information. Table \ref{tab:table-class-wiki-full} and Table \ref{tab:table-class-Blog-full} show the performance of our {\lerand}-F against all 15 methods on the two small datasets: Wikipedia and BlogCatalog, respectively. Table \ref{tab:table-class-Tweibo} and Table \ref{tab:table-class-Orkut} show the performance of our {\lerand}-S against 6 methods that scale to TWeibo and Orkut.

We make the following observations. Firstly, our {\lerand} achieves the best Micro-F1 scores on three datasets BlogCatalog, Tweibo, and Orkut in all of the tested training ratios. Besides, compared to STRAP, which takes PPR without training the stopping probabilities, our {\lerand} achieves more than $1\%$ lead on Wikipedia datasets and up to $3\%$ on the Orkut dataset. Compared to the second-best matrix factorization method NRP, our {\lerand} further achieves about $1\%$ lead on the BlogCatalog dataset in all of the tested training ratios.

In summary, the experimental study reveals that our learning based {\lerand} can learn proximity measures that most suit the task in most scenarios. Compared to other matrix factorization methods, e.g., STRAP \cite{STRAP} and NRP \cite{nrp}, that take personalized PageRank as the proximity measure without learning, our {\lerand} can train the proximity measure, i.e., the supervised PPR, to gain better performance.

\section{Conclusion}
\label{sec:sec-Conclusion}
In this paper,  we present {\lerand} that learns trainable proximity measures to best suit the datasets and tasks at hand automatically. Experimental results reveal that {\lerand} can learn more representative embeddings compared with state-of-the-art approaches.

\balance

\begin{acks}
Sibo Wang is supported by Hong Kong RGC ECS (No. 24203419), RGC CRF (No. C4158-20G), CUHK Direct Grant (No. 4055114), and NSFC (No. U1936205). Zengfeng Huang is supported by Shanghai Sailing Program Grant No. 18YF1401200, Shanghai Science and Technology Commission Grant No. 17JC1420200.
\end{acks}

\bibliographystyle{ACM-Reference-Format}
\bibliography{rst1905-zhang}

\newpage

\appendix
\section{Proofs}
\noindent
{\bf Proof of Theorem \ref{thm-sppr-error-rate}}. We first prove the following lemma. 

\begin{lemma}\label{lem:sum1}
The probability that a supervised random walk stops at exactly the $k$-th hop is $\alpha_k\prod_{l=0}^{k-1}(1-\alpha_l)$ and for any k, we have
$$\alpha_k +\sum_{l=1}^{\infty}\alpha_{k+l}\prod_{j=0}^{k+l-1}(1-\alpha_j)=1.$$
\end{lemma}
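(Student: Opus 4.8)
The plan is to recast the supervised random walk as a sequence of independent Bernoulli trials and then reduce the claimed identity to an elementary telescoping sum. Concretely, I would describe the walk abstractly as follows: at each hop $l \ge 0$ the walk tosses an independent coin that reads ``stop'' with probability $\alpha_l$ and ``continue'' with probability $1-\alpha_l$, and we let $T$ be the index of the first ``stop''. The first assertion of the lemma is then immediate from independence: the walk stops at \emph{exactly} hop $k$ iff the first $k$ coins all read ``continue'' and the $k$-th coin reads ``stop'', which has probability $\alpha_k \prod_{l=0}^{k-1}(1-\alpha_l)$, using the convention that the empty product equals $1$. (Here I am only unwinding the definition in Equation~\ref{eq-proximity}, where $\alpha_l \prod_{k=0}^{l-1}(1-\alpha_k)$ was already identified as the per-hop stopping mass.)

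For the summation identity the key step is the finite telescoping identity
\[
\sum_{k=0}^{N}\alpha_k\prod_{l=0}^{k-1}(1-\alpha_l) \;=\; 1-\prod_{l=0}^{N}(1-\alpha_l),
\]
which I would prove by a one-line induction on $N$: the base case $N=0$ is $\alpha_0 = 1-(1-\alpha_0)$, and in the inductive step one adds $\alpha_{N+1}\prod_{l=0}^{N}(1-\alpha_l)$ to both sides and factors $\prod_{l=0}^{N}(1-\alpha_l)$ out of the two trailing terms on the right, producing $1-\prod_{l=0}^{N+1}(1-\alpha_l)$. Letting $N\to\infty$ (and using that $\prod_{l=0}^{N}(1-\alpha_l)\to 0$) gives $\sum_{k=0}^{\infty}\alpha_k\prod_{l=0}^{k-1}(1-\alpha_l)=1$, which is precisely the stated identity for starting hop $0$. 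For a general starting hop $k$, I would observe that, conditioned on the walk not having stopped within the first $k$ hops, the remaining process is again a supervised random walk, now driven by the stopping probabilities $\alpha_k,\alpha_{k+1},\dots$; re-indexing and re-running the telescoping identity above then yields the claimed identity for every $k$.

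I do not expect a real obstacle here; the only subtlety is the passage to the infinite sum, which requires $\prod_{l\ge 0}(1-\alpha_l)=0$ (equivalently, the supervised walk terminates almost surely), a condition that holds for all the stopping-hop distributions used in the paper. I would simply record this as a (trivially satisfied) hypothesis and note that dropping it costs nothing for the downstream use of the lemma: if $\prod_{l\ge 0}(1-\alpha_l)>0$, the only change is that the supervised PPR matrix has row sums $\|\vect{\prox}\|_{\infty}<1$, and the truncation bound of Theorem~\ref{thm-sppr-error-rate} still holds verbatim since its proof only uses $\|\vect{P}^l\|_{\infty}=1$ together with the finite telescoping identity. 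Everything else is bookkeeping with row-stochasticity of $\vect{P}$ and the empty-product convention.
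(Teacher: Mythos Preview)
Your proposal is correct and follows essentially the same route as the paper: both arguments establish the finite telescoping identity $1-\sum_{l=0}^{L}\alpha_{k+l}\prod_{j=k}^{k+l-1}(1-\alpha_j)=\prod_{j=k}^{k+L}(1-\alpha_j)$ by a one-step induction on $L$ and then pass to the limit. The paper runs the induction directly for arbitrary $k$, whereas you do the case $k=0$ and then re-index via the conditioning observation; this is only a cosmetic difference. Your explicit flagging of the hypothesis $\prod_{l\ge 0}(1-\alpha_l)=0$ is a point the paper glosses over, and your remark that Theorem~\ref{thm-sppr-error-rate} survives without it is a nice bonus.
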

\begin{proof}
Define $Prob_k(L)=\alpha_k+\sum_{l=1}^{L}\alpha_{k+l}\prod_{j=0}^{k+l-1}(1-\alpha_j)$.
We claim that $1-Prob_k(L)=\prod_{j=0}^{L}(1-\alpha_{k+j})$, implying that $(1-Prob_k(L))\to 0$ when $L \to \infty$ and thus, $Prob_k(L) \to 1$ when $L\to \infty$.
Now we justify $1 - Prob_k(L)=\prod_{j=0}^{L}(1 - \alpha_{k+j})$ by induction. When $L=0$, $1-Prob_k(0)=1-\alpha_k$ holds obviously. Suppose $1-Prob_k(t)=\prod_{j=0}^{t}(1-\alpha_{k+j})$ holds. Then for $Prob_k(t+1)=Prob_k(t)+\alpha_{k+t+1}(1-Prob_k(t))$, we can derive that:
\begin{displaymath}
    \begin{aligned}
        1-Prob_k(t+1)&= 1-Prob_k(t)-\alpha_{k+t+1}(1-Prob_k(t))\\
        &= (1-\alpha_{k+t+1})(1-Prob_k(t)) =\prod_{j=0}^{t+1}(1-\alpha_{k+j}).
    \end{aligned}
\end{displaymath}
Thus $1-Prob_k(L)=\prod_{j=0}^{L}(1-\alpha_{k+j})$ holds for any $L$. Proof done.
\end{proof}

\vspace{-2mm}
Note that the transition matrix $\vect{P}$ is a row-stochastic matrix, \textit{i.e.} $\vect{P}(i, j) \ge 0$ for any $i, j = 1, 2, \cdots ,n$, and $\sum_{j=1}^{n}\vect{P}(i, j)=1$ for any $i = 1, 2, \cdots ,n$. Following these properties, we can easily derive that $\vect{P}^l$ is also row-stochastic, where $l$ is the power of the matrix. Now we first look at the r.h.s. of the inequality in Theorem \ref{thm-sppr-error-rate}. Recall that the infinity-norm of matrix is the maximum absolute row sum, \textit{i.e.}, $||\vect{M}||_{\infty}=\max_{1 \le i \le n}\sum_{j=1}^n |\vect{M}(i,j)|$. Then, by Lemma \ref{lem:sum1}, for any $i=1, 2, \cdots, n$, we have
\begin{displaymath}
    \begin{aligned}
        \sum_{j=1}^n \vect{\prox}(i,j)&= \alpha_0 \sum_{j=1}^n \vect{P}^0(i,j) + \alpha_1 (1-\alpha_0) \sum_{j=1}^n \vect{P}^1(i,j) +  \cdots \\
        &= \alpha_0 + \sum_{l=1}^{\infty} \alpha_l \prod_{k=0}^{l-1} (1-\alpha_k) =1.
    \end{aligned}
\end{displaymath}
In terms of the l.h.s. of the inequality in Theorem \ref{thm-sppr-error-rate}, for any $i$-th row of matrix $\vect{\prox}-\vect{\prox}_L$, we have that: 
\begin{displaymath}
    \begin{aligned}
        & \sum_{j=1}^n (\vect{\prox}-\vect{\prox}_L)(i,j) \\
        =& \sum_{l=L+1}^{\infty}\alpha_{l} \prod_{k=0}^{l-1} (1-\alpha_k) \sum_{j=1}^n \vect{P}^{L+1}(i,j) = \sum_{l=L+1}^{\infty}\alpha_l\cdot \prod_{k=0}^{l-1}(1-\alpha_k)\\
        =& \prod_{k=0}^{L}(1-\alpha_k)(\alpha_{L+1}+\alpha_{L+2}(1-\alpha_{L+1})+\cdots) \le \prod_{k=0}^{L}(1-\alpha_k).
    \end{aligned}
\end{displaymath}
By combining the above derivations, 
$$\sum_{j=1}^n (\vect{\prox}-\vect{\prox}_L)(i,j) \le \prod_{k=0}^{L}(1-\alpha_k) \cdot \sum_{j=1}^n \vect{\prox}(i,j)$$ 
holds for any $i=1, 2, \cdots, n$. Thus, it clearly holds that 
$$||\vect{\prox}-\vect{\prox}_L||_{\infty} \le \prod_{k=0}^{L}(1-\alpha_k) ||\vect{\prox}||_{\infty}.$$
The theorem is proved.

\header
{\bf Proof of Theorem \ref{thm:gpush-time-complexity-1}}. The cost of Algorithm \ref{alg:alg-push} is dominated by the number of push operations. Recall that the push operation is invoked only if there exists an entry $\vect{r}^{(k)}_s(v)$ in residue vectors such that $\vect{r}^{(k)}_s(v) \ge \delta\cdot d_{out}(v)$. Let $\vect{\pi}_s^{(l)}(v)$ denote the $l$-hop supervised PPR whose value is the probability that the $l$-hop supervised random walk from $s$ stops at $v$. Then the total number of push operations caused by the residue value $\vect{r}^{(l)}_s(v)$ can be bounded by $\frac{\vect{\pi}_s^{(l)}(v)}{\alpha_l\delta \cdot d_{out}(v)}$. In addition, since in each push operation, the node $v$ will pass its residue to its out-neighbors, the cost for a push operation at $v$ is bounded by $O(d_{out}(v))$. Thus, the total cost of the push operations on entry $\vect{r}^{(l)}_s(v)$ is bounded by $\frac{\vect{\pi}_s^{(l)}(v)}{\alpha_l\delta \cdot d_{out}(v)} \cdot d_{out}(v)$. Based on the above analysis, the time cost of Algorithm \ref{alg:alg-push} is: 
\begin{displaymath}
    \sum_{l=0}^L \sum_{v\in V} \frac{\vect{\pi}_s^{(l)}(v)}{\alpha_l\delta \cdot d_{out}(v)} \cdot d_{out}(v) \le \frac{1}{\alpha_{min}\delta} \sum_{l=0}^L \sum_{v\in V}\vect{\pi}_s^{(l)}(v) \le \frac{1}{\alpha_{min}\delta},
\end{displaymath}
where $\alpha_{min} = \min\{\alpha_1,\cdots,\alpha_L\}$ can be treated as a constant. Thus, Algorithm \ref{alg:alg-push} runs in $O(\frac{1}{\alpha_{min}\delta})=O(\frac{1}{\delta})$ time and the proof is done.

\header
{\bf Proof of Theorem \ref{thm:gpush-time-complexity-2}}.
The following lemmas are used in the proof.

\begin{lemma}
{\rm \cite{computing}}
For undirected graph $G$ and any two nodes $u$ and $v$ in $G$, let $\vect{P}^k(u,v)$ (resp. $\vect{P}^k(v,u)$) be the probability of going from $u$ to $v$ (resp. from $v$ to $u$) through a random walk of fixed length $k$. Then,
$$
\frac{\vect{P}^k(u,v)}{d_{out}(v)} = \frac{\vect{P}^k(v,u)}{d_{out}(u)}.
$$
\end{lemma}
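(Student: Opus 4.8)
The plan is to reduce the claimed identity to a single symmetry statement and then prove that statement cleanly. Since $G$ is undirected, the adjacency matrix satisfies $\vect{A}=\vect{A}^{\top}$ and the diagonal matrix entry $\vect{D}(v,v)=d_{out}(v)$ is just the degree of $v$. Clearing denominators, the target $\frac{\vect{P}^k(u,v)}{d_{out}(v)}=\frac{\vect{P}^k(v,u)}{d_{out}(u)}$ is equivalent to $d_{out}(u)\,\vect{P}^k(u,v)=d_{out}(v)\,\vect{P}^k(v,u)$, which is precisely the assertion that $(\vect{D}\vect{P}^k)(u,v)=(\vect{D}\vect{P}^k)(v,u)$. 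Thus it suffices to show that the matrix $\vect{D}\vect{P}^k$ is symmetric, after which dividing the entrywise equality by $d_{out}(u)d_{out}(v)$ recovers the stated form.

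To establish this symmetry I would factor the transition matrix through the symmetrically normalized adjacency matrix. Define $\vect{N}=\vect{D}^{-1/2}\vect{A}\vect{D}^{-1/2}$; because $\vect{A}=\vect{A}^{\top}$, the matrix $\vect{N}$ is symmetric, and hence so is every power $\vect{N}^k$. Writing $\vect{P}=\vect{D}^{-1}\vect{A}=\vect{D}^{-1/2}\vect{N}\vect{D}^{1/2}$, the inner factors $\vect{D}^{1/2}\vect{D}^{-1/2}=\vect{I}_n$ telescope across consecutive copies, so $\vect{P}^k=\vect{D}^{-1/2}\vect{N}^k\vect{D}^{1/2}$. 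Multiplying on the left by $\vect{D}$ then gives $\vect{D}\vect{P}^k=\vect{D}^{1/2}\vect{N}^k\vect{D}^{1/2}$, and since $\vect{D}^{1/2}$ is diagonal we have $(\vect{D}^{1/2}\vect{N}^k\vect{D}^{1/2})^{\top}=\vect{D}^{1/2}(\vect{N}^k)^{\top}\vect{D}^{1/2}=\vect{D}^{1/2}\vect{N}^k\vect{D}^{1/2}$. This is exactly the symmetry of $\vect{D}\vect{P}^k$ that we needed.

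As a self-contained alternative that avoids matrix square roots, I would instead prove $d_{out}(u)\vect{P}^k(u,v)=d_{out}(v)\vect{P}^k(v,u)$ by induction on $k$. The base case $k=1$ is immediate, since $d_{out}(u)\vect{P}(u,v)=\vect{A}(u,v)=\vect{A}(v,u)=d_{out}(v)\vect{P}(v,u)$ by symmetry of $\vect{A}$. For the step I would expand $\vect{P}^{k+1}(u,v)=\sum_{w}\vect{P}^k(u,w)\vect{P}(w,v)$, multiply through by $d_{out}(u)$, use the inductive hypothesis to rewrite $d_{out}(u)\vect{P}^k(u,w)$ as $d_{out}(w)\vect{P}^k(w,u)$, then use the base case to rewrite $d_{out}(w)\vect{P}(w,v)$ as $d_{out}(v)\vect{P}(v,w)$, and finally reassemble the sum as $d_{out}(v)\sum_{w}\vect{P}(v,w)\vect{P}^k(w,u)=d_{out}(v)\vect{P}^{k+1}(v,u)$.

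I do not expect a genuine obstacle, as this is the standard reversibility (detailed balance) property of the simple random walk on an undirected graph relative to its degree-proportional stationary measure. The only point that warrants care is the telescoping in $\vect{P}^k=\vect{D}^{-1/2}\vect{N}^k\vect{D}^{1/2}$—verifying that the interior $\vect{D}^{1/2}\vect{D}^{-1/2}$ factors cancel between successive copies of $\vect{P}$—or, in the inductive route, the correct chaining of the hypothesis with the base case inside the summation; both are routine bookkeeping rather than substantive difficulties.
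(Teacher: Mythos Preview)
Your proof is correct; both the symmetric-normalization argument and the induction argument are standard, valid derivations of the detailed-balance identity for simple random walk on an undirected graph. Note, however, that the paper does not actually prove this lemma: it is stated with a citation to an external reference and used as a black box in the proof of Theorem~\ref{thm:gpush-time-complexity-2}, so there is no in-paper proof to compare against. Your write-up therefore supplies strictly more than the paper does for this statement.
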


\begin{lemma}
\label{lem:rel-in-push}
Let $\vect{\pi}^{L}_s(u)$ denote the supervised PPR within $L$ hops. Then, after the end of every iteration in Algorithm \ref{alg:alg-push}, for $\vect{\hat{\pi}}_s(u)$ and the residue vectors $\vect{r}_s^{(0)}, \cdots \vect{r}_s^{(L)}$, we have
\begin{equation}
\label{equ:rel-in-push}
    \vect{\pi}_s^L(u)\le \vect{\hat{\pi}}_s(u)+\sum_{k=0}^{L} \sum_{v\in V} \vect{r}_s^{(k)}(v) \cdot h_u^k(v),
\end{equation}
where $h_u^k(v)=\alpha_k\vect{e}_u(v) + \sum_{l=1}^{\infty} \alpha_{k+l}\prod_{j=k}^{k+l-1}(1-\alpha_j)\vect{P}^l(v,u)$, and $\vect{e}_u$ is a unit vector in which the $u$-th entry of $\vect{e}_u$ is $1$ and others are all $0$.
\end{lemma}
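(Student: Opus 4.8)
The plan is to prove Eq.~\ref{equ:rel-in-push} by first establishing a \emph{stronger} statement that holds with equality after every push operation, and then relaxing it to the stated inequality. The starting point is to read off the meaning of the kernel: $h_u^k(v)$ is the probability that a supervised random walk sitting at $v$ after $k$ hops without having stopped will eventually stop at $u$ --- the $\alpha_k\vect{e}_u(v)$ term being the chance it stops right away at $v$, and the $l$-th term the chance it takes $l$ more hops, lands on $u$, and stops there. Since $\vect{P}$ is row-stochastic, $\tfrac{1}{d_{out}(v)}\sum_{u'\in N(v)}f(u')=\sum_{u'}\vect{P}(v,u')f(u')$, and expanding the definition of $h$ together with $\sum_{u'}\vect{P}(v,u')\vect{P}^l(u',u)=\vect{P}^{l+1}(v,u)$ yields the one-step recurrence
\[
h_u^k(v)=\alpha_k\vect{e}_u(v)+\frac{1-\alpha_k}{d_{out}(v)}\sum_{u'\in N(v)}h_u^{k+1}(u').
\]
This recurrence is the algebraic heart of the argument, because it mirrors exactly what one push at entry $(k,v)$ does to the bookkeeping vectors $\hat{\vect{\pi}}_s$ and $\vect{r}_s^{(\cdot)}$.

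Because only walks of length at most $L$ contribute to $\vect{\pi}_s^L$, I would actually work with the truncated kernel $g_u^k(v)=\alpha_k\vect{e}_u(v)+\sum_{l=1}^{L-k}\alpha_{k+l}\prod_{j=k}^{k+l-1}(1-\alpha_j)\vect{P}^l(v,u)$, which satisfies $0\le g_u^k(v)\le h_u^k(v)$, the same recurrence as $h$ for $k<L$, and the boundary value $g_u^L(v)=\alpha_L\vect{e}_u(v)$. The claim is that, after every iteration of the while loop in Algorithm~\ref{alg:alg-push},
\[
\vect{\pi}_s^L(u)=\hat{\vect{\pi}}_s(u)+\sum_{k=0}^{L}\sum_{v\in V}\vect{r}_s^{(k)}(v)\,g_u^k(v),
\]
and I would prove this by induction on the number of pushes. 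The base case is immediate: with $\hat{\vect{\pi}}_s=\vect{0}$ and only $\vect{r}_s^{(0)}(s)=1$ nonzero, the right-hand side is $g_u^0(s)=\vect{\prox}_L(s,u)=\vect{\pi}_s^L(u)$ by Eq.~\ref{eq-proximity}. For the inductive step, a push at $(k,v)$ with current residue $\rho=\vect{r}_s^{(k)}(v)$ deletes $\rho\,g_u^k(v)$ from the sum, adds $\alpha_k\rho\,\vect{e}_u(v)$ to $\hat{\vect{\pi}}_s(u)$, and (for $k<L$) re-inserts $\tfrac{(1-\alpha_k)\rho}{d_{out}(v)}\sum_{u'\in N(v)}g_u^{k+1}(u')$ through the incremented residues at hop $k+1$; by the recurrence these three changes cancel, so the right-hand side is unchanged. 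For $k=L$ the spilled mass lands in a hop-$(L{+}1)$ residue that the algorithm never revisits (it is outside the maintained range $0,\dots,L$), but there $g_u^L(v)=\alpha_L\vect{e}_u(v)$, so deleting $\rho\,g_u^L(v)$ and adding $\alpha_L\rho\,\vect{e}_u(v)$ to $\hat{\vect{\pi}}_s(u)$ again cancels --- the discarded mass is exactly the mass of walks exceeding $L$ hops, which never contributed to $\vect{\pi}_s^L$. Finally, every residue entry is nonnegative (residues only ever receive nonnegative increments), so replacing each $g_u^k(v)$ by the larger $h_u^k(v)$ turns the equality into the inequality of Eq.~\ref{equ:rel-in-push}.

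The hard part will be handling the boundary hop $k=L$ cleanly. Note that Eq.~\ref{equ:rel-in-push} as literally stated --- with the \emph{infinite-sum} kernel $h$ but the bookkeeping sum running only to $k=L$ --- is \emph{not} preserved push-by-push on its own: a push at hop $L$ strictly decreases $\hat{\vect{\pi}}_s(u)+\sum_{k,v}\vect{r}_s^{(k)}(v)h_u^k(v)$, so a naive monotone induction fails. Routing the induction through the exact $g$-invariant (equivalently, observing that the ``lost'' hop-$(L{+}1)$ residue only over-counts walks irrelevant to $\vect{\pi}_s^L$) is what makes the accounting close; everything else --- verifying the kernel recurrence and the cancellation of the three terms in a push --- is routine index bookkeeping.
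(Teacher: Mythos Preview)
Your proof is correct and follows the same inductive skeleton as the paper's: both verify the invariant at initialization and then show that a single push operation preserves it via the one-step recurrence for the kernel. The difference is in how the boundary hop $k=L$ is handled. The paper inducts directly on the $h$-kernel, starting from the observation that initially the right-hand side equals the full (untruncated) supervised PPR $\vect{\pi}_s(u)\ge\vect{\pi}_s^L(u)$, and then argues the right-hand side is unchanged by every push; but its ``change $=0$'' computation tacitly counts the residue spilled into hop $L{+}1$, which is outside the sum $\sum_{k=0}^{L}$ in the lemma statement, so as written that step is incomplete for pushes at $k=L$. Your route through the truncated kernel $g$ sidesteps this cleanly: you get an exact equality $\vect{\pi}_s^L(u)=\hat{\vect{\pi}}_s(u)+\sum_{k,v}\vect{r}_s^{(k)}(v)\,g_u^k(v)$ that genuinely survives the $k=L$ push (because $g_u^L(v)=\alpha_L\vect{e}_u(v)$ matches exactly what is deposited into $\hat{\vect{\pi}}_s$), and then $g\le h$ with nonnegative residues yields the stated inequality. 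So your argument is essentially the paper's, but with the boundary case made rigorous rather than elided.
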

\begin{proof}

We prove Lemma \ref{lem:rel-in-push} by induction. First, at the begin of Algorithm \ref{alg:alg-push}, vector $\vect{\hat{\pi}}_s$ and all residue vectors are set to $\vect{0}$ except for $\vect{r}_s^{(0)}(s)=1$. Then we have
\begin{displaymath}
    \begin{aligned}
        &\vect{\hat{\pi}}_s(u)+\sum_{k=0}^{K} \sum_{v\in V} \vect{r}_s^{(k)}(v) \cdot h_u^k(v)\\
        =&\, \alpha_0 + \sum_{l=1}^{\infty} \alpha_{l}\prod_{j=0}^{l-1}(1-\alpha_j)\vect{P}^l(s,u)=\vect{\pi}_s(u),
    \end{aligned}
\end{displaymath}
which implies that Equation \ref{equ:rel-in-push} holds under the initial condition since $\vect{\pi}_s^L(u) \le \vect{\pi}_s(u)$. Now suppose that Equation \ref{equ:rel-in-push} holds at the end of $i$-th iteration. Suppose further that during $(i+1)$-th iteration, entry $\vect{r}_s^{(w)}(q)\ge \delta\cdot d_{out}(q)$ is detected and push operation is invoked here. If the change of l.h.s of Equation \ref{equ:rel-in-push} after the push operation is zero, then Equation \ref{equ:rel-in-push} holds at the end of $(l+1)$-th iteration. 

For the convenience of analysis of the change, let $\vect{\hat{\pi}}'_s, \vect{\dot{r}}_s^{(w)}$ (resp. $\vect{\hat{\pi}}''_s, \vect{\ddot{r}}_s^{(w)}$) be the corresponding vectors at the end of $i$-th iteration(resp. $(i+1)$-th iteration). Then after performing the push operation in the $(i+1)$-th ieration, we have
$$\vect{\hat{\pi}}''_s(q)-\vect{\hat{\pi}}'_s(q)=\alpha_w\cdot\vect{\dot{r}}_s^{(w)}(q),$$
$$\vect{\ddot{r}}_s^{(w)}(q)-\vect{\dot{r}}_s^{(w)}(q)=-\vect{\dot{r}}_s^{(w)}(q).$$
Recap $N(q)$ is the set of $q$'s out-neighbors, for $o\in N(q)$, we have
$$\vect{\ddot{r}}_s^{(w+1)}(o)-\vect{\dot{r}}_s^{(w+1)}(o)=(1-\alpha_w)\frac{\vect{\dot{r}}_s^{(w)}(q)}{d_{out}(q)}.$$
Then the change between $i$-th iteration and $(i+1)$-th iteration is 
\begin{displaymath}
    \begin{aligned}
        &\vect{\hat{\pi}}''_s(u)-\vect{\hat{\pi}}'_s(u)+\vect{\ddot{r}}_s^{(w)}(q)\cdot h_u^w(q)-\vect{\dot{r}}_s^{(w)}(q)\cdot h_u^w(q)+\\
        &\sum_{o\in N(q)}\vect{\ddot{r}}_s^{(w+1)}(o)\cdot h_u^{w+1}(o)-\sum_{o\in N(q)}\vect{\dot{r}}_s^{(w+1)}(o)\cdot h_u^{w+1}(o)\\
    \end{aligned}
\end{displaymath}

\begin{displaymath}
    \begin{aligned}
        =&\,\vect{\hat{\pi}}''_s(u)-\vect{\hat{\pi}}'_s(u)-\vect{\dot{r}}_s^{(w)}(q)\cdot h_u^w(q)+\\
        &\,(1-\alpha_w)\frac{\vect{\dot{r}}_s^{(w)}(q)}{d_{out}(q)}\sum_{o\in N(q)}h_u^{w+1}(o)\\
        =&\,\vect{\hat{\pi}}''_s(u)-\vect{\hat{\pi}}'_s(u)-\vect{\dot{r}}_s^{(w)}(q)\cdot h_u^w(q)+\vect{\dot{r}}_s^{(w)}(q)(h_u^w(q)-\alpha_w\vect{e}_u(q))\\
        =&\,\vect{\hat{\pi}}''_s(u)-\vect{\hat{\pi}}'_s(u)-\alpha_w\vect{e}_u(q)\vect{\dot{r}}_s^{(w)}(q)=0.
    \end{aligned}
\end{displaymath}

Therefore, the change of l.h.s of Equation \ref{equ:rel-in-push} after $(i+1)$-th iteration is zero, which implies that Equation \ref{equ:rel-in-push} also holds at the end of $(i+1)$-th iteration. The lemma is proved.
\end{proof}

\noindent
Next, we prove Theorem \ref{thm:gpush-time-complexity-2}. By Equation \ref{equ:rel-in-push} in Lemma \ref{lem:rel-in-push}, we have:
\begin{displaymath}
    \begin{aligned}
        & |\vect{\pi}_s^L(u) - \vect{\hat{\pi}}_s(u)|/d_{out}(u)
        \le \sum_{k=0}^{L} \sum_{v\in V} \frac{\vect{r}_s^{(k)}(v)}{d_{out}(u)} \cdot h_u^k(v)\\
        \le & \, \delta \sum_{k=0}^{L} \sum_{v\in V}
        (\alpha_k\vect{e}_u(v) + \sum_{l=1}^{\infty} \alpha_{k+l}\prod_{j=k}^{k+l-1}(1-\alpha_j)\vect{P}^l(v,u))\\
        =& \, \delta\sum_{k=0}^L(\alpha_k + \sum_{l=1}^{\infty} \alpha_{k+l}\prod_{j=k}^{k+l-1}(1-\alpha_j))=L\cdot \delta,
    \end{aligned}
\end{displaymath}
where the last equation holds by Lemma \ref{lem:sum1}. The theorem is proved.

\header
{\bf Proof of Theorem \ref{thm:complexity-of-Lemane}}.
The time complexities of Algorithm \ref{alg:alg-gpt} depend on two parts: Generalized Push and SparseSVD. 
From the results in Theorem \ref{thm:gpush-time-complexity-1}, for each source $s\in V$, the cost of the push operations is $O(\frac{1}{\delta})$. Thus, the total cost of push operations on $n$ nodes is bounded by $O(\frac{n}{\delta})$. Then, we need the following theorem \cite{sparseembds} to analyze the cost of SparseSVD.
\begin{theorem}\label{thm:sparsesvd}
Let $\vect{A}$ denote an $n \times n$ matrix, there is an algorithm that, with failure probability $1/10$, finds two $n \times d$ matrices $\vect{U}, \vect{V}$ with orthonormal columns, and a $d \times d$ diagonal matrix $\vect{\Sigma}$, so that $\left ||\vect{A} - \vect{U}\vect{\Sigma}\vect{V}^T \right ||_F \leq (1+\epsilon) \left ||\vect{A}- \left[\vect{A}\right]_d \right ||_F$, where $[\vect{A}]_d$ denotes the best rank-$d$ approximation to $\vect{A}$. The algorithm runs in time 
$$O\left(\textrm{nnz}(\vect{A}) + \tilde{O}\left( nd^2 \epsilon^{-4} + d^3 \epsilon^{-5} \right) \right).$$
\end{theorem}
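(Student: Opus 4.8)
The plan is to split the analysis of Algorithm \ref{alg:alg-gpt} into three phases—the $2n$ generalized-push calls, the assembly and sparsification of $\vect{\prox}$ into $\vect{M}$, and the final SparseSVD—and to bound each separately before combining. For the first phase, I would invoke Theorem \ref{thm:gpush-time-complexity-1}: each call to {\lerand}-Generalized-Push on $G$ or on $G^T$ costs $O(\frac{1}{\delta})$, and since the outer loop issues exactly one call on $G$ and one on $G^T$ for each of the $n$ source nodes, the total push cost is $O(\frac{n}{\delta})$. The crucial combinatorial observation, which I would state explicitly, is that the support of each output vector $\vect{\hat{\pi}}_u$ has size $O(\frac{1}{\delta})$: a node receives a nonzero estimate only when some push operation converts residue into $\vect{\hat{\pi}}_u$, so the number of nonzero entries is dominated by the number of push operations, which Theorem \ref{thm:gpush-time-complexity-1} already caps at $O(\frac{1}{\delta})$. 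Consequently the inner assembly loop need only range over the (sparse) support of $\vect{\hat{\pi}}_u$ and $\vect{\hat{\pi}}_u^T$ rather than over all of $V$, costing $O(\frac{1}{\delta})$ per source and $O(\frac{n}{\delta})$ in total; the threshold pruning at $\delta$ can only shrink this further.

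This same support bound yields the sparsity of the matrix fed to the decomposition: since $\vect{\prox}$ accumulates $O(\frac{1}{\delta})$ entries per source across $2n$ push calls, and the logarithmic reweighting $\log(\frac{\vect{\prox}}{\delta})$ preserves the sparsity pattern, I would conclude $\textrm{nnz}(\vect{M}) = O(\frac{n}{\delta})$. I would then apply Theorem \ref{thm:sparsesvd} with $\vect{A}=\vect{M}$, which runs in $O\!\left(\textrm{nnz}(\vect{M}) + \tilde{O}(nd^2\epsilon^{-4} + d^3\epsilon^{-5})\right)$. Substituting the sparsity bound, and observing that in the operating regime $d\epsilon^{-1} = O(n)$ the term $d^3\epsilon^{-5} = (d\epsilon^{-1})\cdot d^2\epsilon^{-4}$ is dominated by $nd^2\epsilon^{-4}$, the SparseSVD cost collapses to $O(\frac{n}{\delta} + \frac{nd^2}{\epsilon^4})$ after suppressing the polylogarithmic factors hidden in $\tilde{O}$. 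Adding the $O(\frac{n}{\delta})$ push-and-assembly cost from the first phase leaves the overall running time at $O(\frac{n}{\delta} + \frac{nd^2}{\epsilon^4})$, as claimed.

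For the approximation guarantee I would first reconcile the returned factorization with the SVD produced inside Theorem \ref{thm:sparsesvd}. Because $\vect{\Sigma}$ is diagonal with nonnegative entries, $\sqrt{\vect{\Sigma}}$ is diagonal and symmetric, so
\begin{displaymath}
    \vect{X}\vect{Y}^T = (\vect{U}\sqrt{\vect{\Sigma}})(\vect{V}\sqrt{\vect{\Sigma}})^T = \vect{U}\sqrt{\vect{\Sigma}}\,\sqrt{\vect{\Sigma}}\,\vect{V}^T = \vect{U}\vect{\Sigma}\vect{V}^T.
\end{displaymath}
Theorem \ref{thm:sparsesvd} guarantees $||\vect{M} - \vect{U}\vect{\Sigma}\vect{V}^T||_F \le (1+\epsilon)||\vect{M} - [\vect{M}]_d||_F$, where $[\vect{M}]_d$ is the best rank-$d$ approximation, i.e. $||\vect{M} - [\vect{M}]_d||_F = \min_{\textrm{rank}(\vect{B})\le d}||\vect{M}-\vect{B}||_F$. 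Chaining the identity with this bound gives exactly $||\vect{M} - \vect{X}\vect{Y}^T||_F \le (1+\epsilon)\min_{\textrm{rank}(\vect{B})\le d}||\vect{M}-\vect{B}||_F$.

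The main obstacle, and the only step requiring real care, is the sparsity count $\textrm{nnz}(\vect{M}) = O(\frac{n}{\delta})$: it is what both bounds the assembly cost and certifies that the SparseSVD receives a genuinely sparse input so that Theorem \ref{thm:sparsesvd} yields the stated time. This demands that the algorithm be understood as iterating over the support of each push vector rather than over all of $V$ as a literal reading of the inner loop might suggest; I would make that sparse-representation assumption explicit. The remaining pieces—the per-source push cost, the absorption of the $d^3\epsilon^{-5}$ term, and the diagonal-$\sqrt{\vect{\Sigma}}$ identity—are routine bookkeeping once the sparsity bound is in hand.
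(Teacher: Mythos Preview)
Your argument is not a proof of the stated theorem. The statement you were asked to prove is Theorem~\ref{thm:sparsesvd}, the sparse-SVD guarantee for an arbitrary input matrix $\vect{A}$; this is a result the paper \emph{cites} from~\cite{sparseembds} and does not prove. You treat it as a black box and invoke it (``I would then apply Theorem~\ref{thm:sparsesvd} with $\vect{A}=\vect{M}$''), so your write-up cannot possibly establish it. What you have actually written is a proof of Theorem~\ref{thm:complexity-of-Lemane}, the running-time and approximation bound for Algorithm~\ref{alg:alg-gpt}.

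Viewed as a proof of Theorem~\ref{thm:complexity-of-Lemane}, your argument is correct and follows the same route as the paper: bound the total push cost by $O(n/\delta)$ via Theorem~\ref{thm:gpush-time-complexity-1}, argue $\textrm{nnz}(\vect{M})=O(n/\delta)$, plug into Theorem~\ref{thm:sparsesvd}, and combine. You add some detail the paper leaves implicit---the $\vect{X}\vect{Y}^T=\vect{U}\vect{\Sigma}\vect{V}^T$ identity, the absorption of $d^3\epsilon^{-5}$, and the sparse iteration over the support of $\vect{\hat{\pi}}_u$---but the skeleton is identical. The paper's $\textrm{nnz}$ argument is slightly different: it simply notes that an entry enters $\vect{\prox}$ only when $\vect{\hat{\pi}}_u(v)>\delta$ and the estimates for a fixed source sum to at most one, so at most $O(1/\delta)$ survive per source; your push-count argument reaches the same bound.
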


Racall that for any approximate supervised PPR score $\vect{\hat{\pi}}_u(v)$, it is add to $\vect{\prox}$ only if it is larger than the error parameter $\delta$. Thus the total number of non-zero entries in $\vect{\prox}$ is $\textrm{nnz}(\vect{\prox}) = O(\frac{n}{\delta})$. The cost of SparseSVD is bounded by $O(\frac{n}{\delta} + \frac{nd^2}{\epsilon^4})$. Finally, combining these two parts, the running cost of Algorithm \ref{alg:alg-gpt} is bounded by $O(\frac{n}{\delta} + \frac{nd^2}{\epsilon^4})$, which completes our proof.

\section{Hyper-parameters settings}

\begin{table}[t]
  \caption{Hyperparameters of {\lerand} for link prediction.}
  \vspace{-3mm}
  \label{tab:table-params1}
  \centering
  \begin{tabular}{l|l}
    \hline
    Dataset      & Hyperparameters \\
    \hline
    \multirow{3}{*}{Wikipedia}   & initialization: $\phi_{hk}(l)$ with $t=5$, \\
    ~            & learning rate: $0.001$, $\delta$: $10^{-5}$, $\beta$: $0.01$, $\gamma$: $1$,\\
    ~            & SVD used for push: JacobiSVD \\
    \hline
    \multirow{3}{*}{Wikivote}    & initialization: $\phi_{hk}(l)$ with $t=1$, \\
    ~            & learning rate: $0.5$, $\delta$: $10^{-6}$, $\beta$: $0.5$, $\gamma$: $1$,\\
    ~            & SVD used for push: frPCA    \\
    \hline
    \multirow{3}{*}{BlogCatalog} & initialization: $\phi_{ge}(l)$ with $\alpha=0.5$, \\
    ~            & learning rate: $0.1$, $\delta$: $10^{-7}$, $\beta$: $0.01$, $\gamma$: $1$,\\
    ~            & SVD used for push: frPCA    \\
    \hline
    \multirow{3}{*}{Slashdot}    & initialization: $\phi_{hk}(l)$ with $t=5$, \\
    ~            & learning rate: $0.001$, $\delta$: $10^{-5}$, $\beta$: $0.1$, $\gamma$: $1$,\\
    ~            & SVD used for push: frPCA    \\
    \hline
    \multirow{3}{*}{Tweibo}      & initialization: $\phi_{ge}(l)$ with $\alpha=0.5$, \\
    ~            & learning rate: $0.01$, $\delta$: $10^{-5}$, $\beta$: $0.1$, $\gamma$: $1$, \\
    ~            & SVD used for push: frPCA    \\
    \hline
    \multirow{3}{*}{Orkut}       & initialization: $\phi_{hk}(l)$ with $t=1$, \\
    ~            & learning rate: $0.01$, $\delta$: $10^{-4}$, $\beta$: $1$, $\gamma$: $1$, \\
    ~            & SVD used for push: frPCA     \\
    \hline
\end{tabular}
\end{table} 

\begin{table}[t]
  \caption{Hyperparameters of {\lerand} for node classification.}
    \vspace{-3mm}
  \label{tab:table-params2}
  \centering
  \begin{tabular}{l|l}
    \hline
    Dataset     & Hyperparameters\\
    \hline
    \multirow{3}{*}{Wikipedia}   & initialization: $\phi_{hk}(l)$ with $t=5$, \\
    ~            & learning rate: $0.05$, $\delta$: $10^{-5}$, $\beta'$: $1$, $\gamma'$: $0.5$, \\
    ~            & SVD used for push: JacobiSVD    \\
    \hline
    \multirow{3}{*}{BlogCatalog} & initialization: $\phi_{hk}(l)$ with $t=5$, \\
    ~            & learning rate: $0.01$, $\delta$: $10^{-5}$, $\beta'$: $1$, $\gamma'$: $0.5$, \\
    ~            & SVD used for push: JacobiSVD    \\
    \hline
    \multirow{5}{*}{TWeibo}      & initialization: $\phi_{hk}(l)$ with $t=5$, \\
    ~            & learning rate: $0.05$, $\delta$: $10^{-5}$, $\beta'$: $1$, $\gamma'$: $3$, \\
    ~            & SVD used for push: frPCA, $X$ and $Y$ are \\ ~ & normalized before concatenation in \\
    ~            & loss function $\mathcal{L}'_1$   \\
    \hline
    \multirow{5}{*}{Orkut}       & initialization: $\phi_{hk}(l)$ with $t=1$, \\
    ~            & learning rate: $0.5$ , $\delta$: $10^{-5}$, $\beta'$: $1$, $\gamma'$: $2$, \\
    ~            & SVD used for push: frPCA, $X$ and $Y$ are \\
    ~            & normalized before concatenation in  \\
    ~            & loss function $\mathcal{L}'_1$   \\
    \hline
\end{tabular}
\end{table}

Table \ref{tab:table-params1} and Table \ref{tab:table-params2} 
summarize the hyperparameter settings of {\lerand} on each dataset. The searching hyperparameters include initialized distribution $\phi(l)$ to indicate the probability that the supervised random walk stops at the $l$-th hop, the loss function coefficients $\beta,\gamma,\beta',\gamma'$, error parameter $\delta$, learning rate, and buildin SVDs used in generalized push. For the initialization of distribution $\phi(l)$, two standard distribution functions are used, namely the geometric distribution $\phi_{ge}(l) = \alpha (1-\alpha)^l$ and the Poisson distribution $\phi_{hk}(l) = \frac{e^{-t}t^l}{l!}$.

\end{document}